\icmltitlerunning{Feature Noise Induces Loss Discrepancy Across Groups}
\newcommand\sN{\ensuremath{\mathcal{N}}}
\newcommand\sO{\ensuremath{\mathcal{O}}}
\newcommand\BP{\ensuremath{\mathbb{P}}}
\newcommand\BR{\ensuremath{\mathbb{R}}}
\newcommand{\var}{\text{Var}} 
\newcommand{\cov}{\text{Cov}} 
\newcommand\p[1]{\ensuremath{\left( #1 \right)}} 
\newcommand\pb[1]{\ensuremath{\left[ #1 \right]}} 
\newcommand\half{\ensuremath{\frac{1}{2}}}
\newcommand\eqdef{\ensuremath{\stackrel{\rm def}{=}}} 
\newcommand\refeqn[1]{(\ref{eqn:#1})}
\newcommand\refsec[1]{Section~\ref{sec:#1}}
\newcommand\reffig[1]{Figure~\ref{fig:#1}}
\newcommand\reftab[1]{Table~\ref{tab:#1}}
\newcommand\refprop[1]{Proposition~\ref{prop:#1}}
\newcommand\refdef[1]{Definition~\ref{def:#1}}
\newcommand{\E}{\ensuremath{\mathbb{E}}} 
\newcommand\pl[1]{}
\newcommand{\pr}{\ensuremath{\mathbb{P}}}
\newcommand{\pab}[1]{\ensuremath{\left|#1\right|}}
\newcommand{\sld}{\text{SLD}}
\newcommand{\cld}{\text{CLD}}
\newcommand{\ob}{o}
\newcommand{\obg}{o_{+\text{g}}}
\newcommand{\obnog}{o_{-\text{g}}}
\newcommand{\lr}{\ensuremath{\ell_\text{res}}}
\newcommand{\ls}{\ensuremath{\ell_\text{sq}}}
\newcommand{\betg}{\ensuremath{\hat \beta_\text{g}}}
\definecolor{Green}{rgb}{1, 1, 1} 
\definecolor{Red}{rgb}{1, 1, 1} 
\definecolor{Green1}{rgb}{0.9, 0.9, 1} 
\definecolor{Red1}{rgb}{0.9, 0.9, 1} 
\definecolor{Green2}{rgb}{0.95, 0.95, 1} 
\definecolor{Red2}{rgb}{0.95, 0.95, 1} 
\begin{document}
	
	\twocolumn[
	\icmltitle{Feature Noise Induces Loss Discrepancy Across Groups}
	
	
	
	\icmlsetsymbol{equal}{*}
	
	\begin{icmlauthorlist}
		\icmlauthor{Fereshte Khani}{to}
		\icmlauthor{Percy Liang}{to}
	\end{icmlauthorlist}
	
	\icmlaffiliation{to}{Department of Computer Scinece, Stanford University}	
	\icmlcorrespondingauthor{Fereshte Khani}{fereshte@stanford.edu}
	
	\icmlkeywords{Machine Learning, ICML}
	
	\vskip 0.3in
	]
	
	
	
	\printAffiliationsAndNotice{} 

	\begin{abstract}
	The performance of standard learning procedures has been observed to differ widely across groups. 
	Recent studies usually attribute this loss discrepancy to an information deficiency for one group (e.g., one group has less data). 
	In this work, we point to a more subtle source of loss discrepancy---feature noise. 
	Our main result is that even when there is no information deficiency specific to one group (e.g., both groups have infinite data), adding the same amount of feature noise to all individuals leads to loss discrepancy.
	For linear regression, we thoroughly characterize the effect of feature noise on loss discrepancy in terms of the amount of noise, the difference between moments of the two groups, and whether group information is used or not.
	We then show this loss discrepancy does not vanish immediately if a shift in distribution causes the groups to have similar moments. 
	On three real-world datasets, we show feature noise increases the loss discrepancy if groups have different distributions, while it does not affect the loss discrepancy on datasets where groups have similar distributions.
\end{abstract}

	\section{Introduction}
\label{sec:intro}

Standard learning procedures such as empirical risk minimization have been shown to result in
models that perform well on average but whose performance differ widely across groups such as whites and non-whites \citep{angwin2016machine,barocas2016}.
This \emph{loss discrepancy} across groups
is especially problematic in critical applications that impact people's lives \citep{berk2012criminal,chouldechova2017}.
Despite the vast literature on removing loss discrepancy \citep{hardt2016,khani2019mwld,agarwal2018reductions,zafar2017fairness},
the direct removal of loss discrepancy might introduce other problems such as intra-group loss discrepancy \citep{lipton2018does}
and adverse long-term impacts \citep{liu2018delayed}.
Therefore, it is important to understand the source of loss discrepancy.

Why do such loss discrepancies exist?
The literature generally studies sources of loss discrepancy due to an ``information deficiency'' of one group---that is,
one group has, for example,
more noise \citep{corbett2017algorithmic},
less training data \citep{chouldechova2018frontiers,chen2018my},
biased prediction targets \citep{madras2019fairness},
or less-predictive features \citep{chen2018my}.
Some work also states that groups have different risk distributions,
and thus making hard (binary) decisions on such distributions causes loss discrepancy \citep{corbett2018measure,canetti2019soft}.
In this work, we show that even under very favorable conditions---i.e., no bias in the prediction targets, \emph{infinite} data, perfect predictive features for both groups,
and no hard (binary) decisions (in regression)---adding the \emph{same} amount of feature noise to all individuals still leads to loss discrepancy.

In order to study the effect of feature noise (which includes omitted features as a special case) and use of group information on loss discrepancy, we consider the following regression setup.
We assume each individual belongs to a group $g \in \{0,1\}$ and has latent features $z \in \BR^d$ which cause the target $y = \beta^\top z + \alpha$.
However, we only observe a noisy version of $z$
through one of the following \emph{observation functions}:
\begin{align}
\obnog(z, g, u) &= [z + u],\\
\obg(z, g, u) &= [z + u, g],
\end{align}
where $u \in \BR^d$ is mean-zero noise independent of the rest of the variables,
and the group membership $g$ can be either included or not.
We study the discrepancy of both the residual ($y - \hat y$), which measures the amount of underestimation and the squared error ($(y - \hat y)^2$), which measures the overall performance.
Abusing terminology, we call both \emph{losses}.

We consider two common flavors of loss discrepancies:
(i) \emph{statistical loss discrepancy}, which measures the difference between the expected losses of the two groups
\citep{hardt2016,agarwal2018reductions,woodworth2017,pleiss2017,khani2019mwld};
and (ii) \emph{counterfactual loss discrepancy}, which measures
the difference between the loss of an individual and a ``counterfactual'' individual
with the same characteristics but from another group \citep{kusner2017,chiappa2019path,loftus2018causal,nabi2018fair,kilbertus2017avoiding}.

We have two main results.
First, we show that without using group information, feature noise causes statistical loss discrepancy determined by four factors: the amount of feature noise and the difference between means, variances, and sizes of the groups.
In particular, the loss discrepancy based on residual is proportional to the difference between means,
and the loss discrepancy based on squared error is proportional to the difference between variances (\refprop{obnog}). 
Our second result is that using group information ($\obg$) alleviates the statistical loss discrepancy but causes high counterfactual loss discrepancy (\refprop{obg}).

To better understand the effect of using group information, we further decompose the incurred loss discrepancy into two terms, one related to the moments of training distribution, and one related to the moments of the test distribution.
We show that the statistical loss discrepancy of
$\obnog$ is mainly due to differences in the test distribution, and it vanishes immediately if a shift in distribution causes the groups to have similar distributions.
Meanwhile, the loss discrepancy of $\obg$ is mainly due to differences in the training distribution, and it does not vanish immediately after shifts in the population (\refprop{cov_shift}).

%

We validate our results on three real-world datasets: for predicting the final grade of secondary school students \pl{(dataset name)}, final GPA of law students \pl{(dataset name)}, and crime rates in the US communities \pl{(dataset name)},
where the group $g$ is either race or gender.
We consider two types of feature noise: (i) adding the same amount of noise to every feature and (ii) omitting features.
We show that on the Communities\&Crime and Students datasets where groups have different means, variances, and sizes, noise leads to high loss discrepancy.
On the other hand, on the Law School dataset, where groups have similar means and variances, noise does not affect the loss discrepancy.
Finally, for the datasets with high loss discrepancy,
we consider a distribution shift to a re-weighted dataset where groups have similar means and show that the loss discrepancy of the estimator using $\obnog$ vanishes immediately while the loss discrepancy of the estimator using $\obg$ vanishes more slowly \pl{/(\refprop{cov_shift})} with the rate studied in \refprop{cov_shift}.

\section{Setup}
\label{sec:setup}
We consider the following regression setup, summarized in 
\reffig{causal_graph}.
We assume each individual belongs to a group $g \in \{0,1\}$, e.g., whites and non-whites; and has latent (unobservable) features, $z \in \BR ^d$ which cause the prediction target $y \in \BR$.
For each individual, we observe $x = \ob(z, g, u)$ through an observation function $\ob$,
where $u \in \BR^d$ is a random vector representing the source of (feature) noise in observation. 
\pl{maybe should say parenthetically or in a footnote something about target noise?}

As an example, the latent feature ($z$) is the knowledge of a student in $d$ subjects, and $y$ is her score in an entrance exam, which is a combination of the different subjects.
However, we only observe a noisy version of $z$ via exam scores, the school's name, or letter of recommendation, where the latter two might reveal information about group membership ($g$). 

Let $h : \BR^d \to \BR$ be a predictor, and $\hat y = h(o(z,g,u))$ be the predicted value for individual $z$.
We measure the impact of the predictor for an individual through a loss function $\ell(\hat y, y)$ (e.g., for squared error, $\ell(\hat y, y) = (\hat y - y)^2$),
abbreviated as $\ell$ when clear from the context.
In the entire paper, we analyze the population setting (infinite data) since we show that the effect of feature noise does not even vanish in this favorable setting.
\begin{figure}
	\tikzset{
		-Latex,auto,node distance =1 cm and 1 cm,semithick,
		state/.style ={circle, draw, minimum width = 0.5 cm, inner sep=0pt},
		fstate/.style ={circle, draw, minimum width = 0.5 cm, inner sep=0pt, fill=gray!50},
		point/.style = {circle, draw, inner sep=0.04cm,fill,node contents={}},
		bidirected/.style={Latex-Latex,dashed},
		el/.style = {inner sep=2pt, align=left, sloped}
	}
	\begin{minipage}[t]{0.14\textwidth}
		\vspace{0pt}
		\begin{tikzpicture}
		\node[fstate] (x) at  (0,    0) {$x$};
		\node[fstate] (hy) at (0,    -0.9) {$\widehat y$};
		\node[fstate] (y) at  (-1.4, -0.9) {$y$};
		\node[fstate] (g) at  (-0.7,    0.8) {$g$};
		\node[state] (z) at   (-1.4, 0) {$z$};

		\path (z) edge (y);
		\path (g) edge (x);
		\path (z) edge (x);
		\path (g) edge (z);
		\path (x) edge (hy);
		\end{tikzpicture}
\end{minipage}%
	\begin{minipage}[t]{0.34\textwidth}
	\caption{\label{fig:causal_graph} In this work, we consider a prediction problem from $x$ to $\hat y$ where the output $y$ is a deterministic function of unobserved random vector $z$.}
	\end{minipage}
\end{figure}
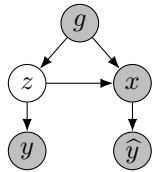

%

\subsection{Loss Discrepancy Notions}
There are two flavors of loss discrepancies:
statistical and counterfactual loss discrepancy.
Statistical loss discrepancy measures how much two groups are impacted differently. This notion has been studied in economics and machine learning under the names of disparate impact, equal opportunity, classification parity, etc. \citep{arrow1973theory,phelps1972statistical,hardt2016,corbett2018measure}.
Here, we define statistical loss discrepancy similar to mentioned work, but looking at a general loss function.
\begin{definition}[Statistical Loss Discrepancy (\sld{})]    
	\label{def:sld}
	For a predictor $h$, observation function $o$, and loss function $\ell$,
	statistical loss discrepancy is the difference between the expected loss between two groups:
	\begin{align}
	\sld(h,\ob, \ell) = \pab{\E [\ell \mid g=1] - \E [\ell \mid g=0]}
	\end{align}
\end{definition}

\sld{} operates at the group level and indicates how much a predictor yields higher loss for one group.
However, it does not provide any guarantees at the individual level.

For individuals,
counterfactual loss discrepancy measures how much two similar individuals (in our setup, two individuals that have the same $z$,  not necessarily the same $x$) are treated differently because of their group membership.
Here, we define counterfactual loss discrepancy similar to prior work \cite{kusner2017,nabi2018fair}, but looking at a general loss function.
\begin{definition}(Counterfactual Loss Discrepancy (\cld{}))
	\label{def:cld} 
	For a predictor $h$, observation function $o$, and loss function $\ell$,
	counterfactual loss discrepancy is the expected difference between the loss of an individual and its counterfactual counterpart:
	\begin{align}
	\cld (h, \ob, \ell) = \E \pb{\pab{L_0 - L_1}},
	\end{align}
	where $L_{g’} = \E[\ell(h(o(z, g’,u)), y) | z]$.
\end{definition}

There are many concerns regarding the meaningfulness of CLD when group identity is an immutable characteristic (e.g., race and sex) \cite{holland1986statistics, freedman2004graphical,holland2003causation}, which we discuss further in \refsec{related_work}.

Note that
\cld{} and \sld{} are not comparable.
A model can have the same loss for similar individuals ($\cld=0$), but since groups have different distributions over individuals, it can have higher loss for one group ($\sld\neq0$).
Conversely, a model can induce different losses for similar individuals due to their group membership ($\cld\neq0$), but when averaged over the groups, it can result in similar expected losses for both groups ($\sld=0$). See \refprop{not_comparable} for the construction.

It is clear that the observation function can asymmetrically affect groups and cause high loss discrepancy.
For example, the observation function ($\ob$) can add noise to the features \emph {only} when $g=0$ or systematically report a lower value of $z$ for one group. 
However, in this work, we are interested to see if it is possible that adding the same amount of noise (in a symmetric way) to all individuals affects groups differently (i.e., causes high loss discrepancy). 
We answer this question in the affirmative and exactly characterize the group distributions that are more susceptible to have high loss discrepancy under feature noise.

	\begin{figure}[t]
	\centering
	\includegraphics[width=0.5\textwidth]{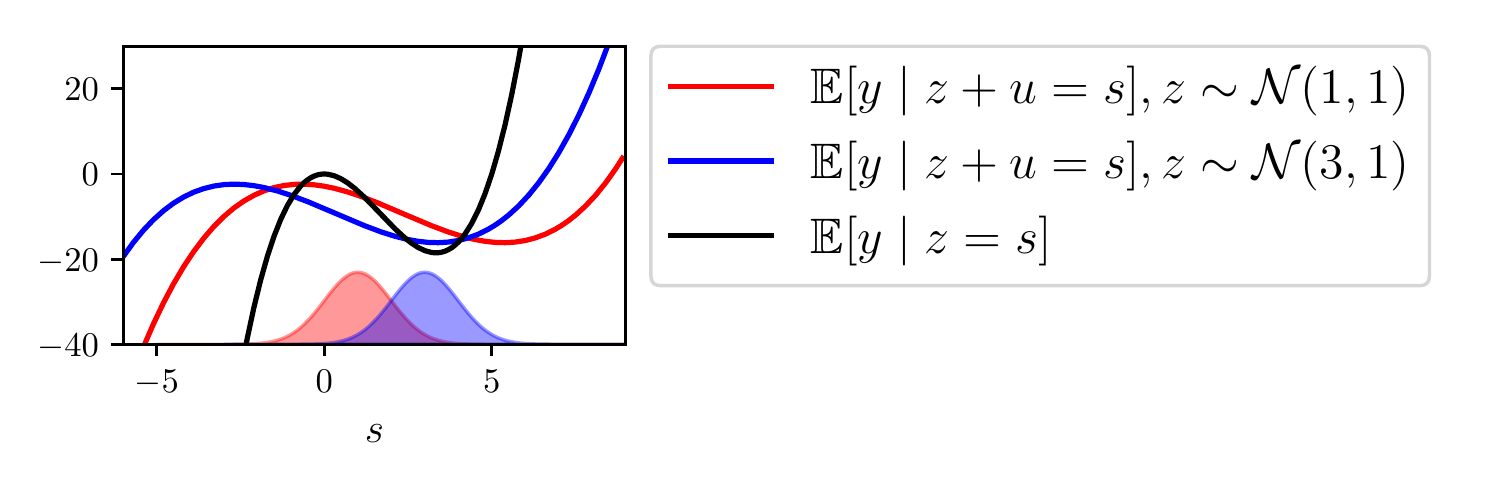}
	\caption{\label{fig:mes_error_non_linear} 	
	In this example $y = f(z) = z^3+5z^2$, 
	and we observe $z+u$ instead of $z$ where $u \sim \sN(0,1)$. 
  As shown in \refeqn{dependence}, the best estimate of $y$ (i.e., $\E [y \mid z+u]$) depends on the distribution of $z$.
	The blue line is the best estimate  when $z \sim \sN (1,1)$ and the red line is the best estimate when $z \sim \sN(3,1)$.}
\end{figure}

\section{Feature Noise}
\label{sec:feature_noise}
We are interested in additive feature noise with mean zero that is independent of other variables ($z$ and $y$). Feature noise is also known as classical measurement error \cite{carroll2006measurement}.
We allow for any noise distribution---e.g., Laplace, Gaussian, or any discrete distribution---as long as it has mean zero. 
The independence assumption means $u$ is independent of the value of $z_i$, but the feature noise can have different distributions for different features.
Omitted features can be simulated by having noise with infinite variance.
Feature noise and omitted features are pervasive in real-world applications; examples include test scores for college admissions or interview scores for hiring.

Note that without feature noise, i.e., $x=z$, the Bayes optimal predictor, $\E[y \mid x]$, does not depend on the distribution of $z$, but this no longer holds with feature noise.
Formally, let $y=f(z)$ and $u$ denote the additive noise on each feature, i.e., we observe $x=z+u$ instead of $z$.
In this case, the Bayes-optimal predictor $\E [y \mid x]$, depends on the distribution of inputs ($\pr_z$):
\begin{align}
\label{eqn:dependence}
\E [y \mid x] = \frac{\int \pr_u (u) \pr_z (x-u) f (x-u)du}{\int \pr_u (u) \pr_z (x-u)du}.
\end{align}
\reffig{mes_error_non_linear} shows an example of this dependence.

Feature noise has been extensively studied in linear regression (e.g., \citet{fuller2009measurement}).
\pl{say something more about relationship to our paper}
In the rest of the paper, we focus on linear regression and show feature noise can cause loss discrepancy across groups.

\subsection{Feature Noise in Linear Regression Background}
\label{sec:measurement_error}
\pl{In this section, we specialize to linear regression...}
In this section, we give a brief background on how feature noise makes parameter estimates inconsistent, in the simplified setting. 
We study the effect of feature noise on groups in \refsec{ld_linear_regression}.
Let $\beta, \alpha$ denote the true parameters such that for each individual, $y=\beta^\top z + \alpha$, 
\footnote{Observing a noisy version of $y$ does not change the estimate of parameters in presence of infinite data. For simplicity, we consider noiseless $y$.}
and assume we observe $x = z+u$. 
When $u$ is feature noise (i.e., mean-zero and independent of other variables), we can analyze the estimated parameters via least squares \citep{frisch1934statistical}.
\begin{align}
\label{eqn:measurement_error_beta}
\hat \beta =& \Sigma_x^{-1}\Sigma_{xy} =(\Sigma_z + \Sigma_u)^{-1}\Sigma_z \beta\\
\label{eqn:measurement_error_alpha}
\hat{\alpha} =& (\beta - \hat \beta)^\top \E [z] + \alpha,
\end{align}
where for any two random vectors $v$ and $w$, $\Sigma_{vw} = \E [ (v - \mu_v)(w - \mu_w)^\top]$ 
denotes the covariance matrix between $v$ and $w$, and $\mu_v = \E [v]$ denote the expected value of $v$. We write $\Sigma_v$ for $\Sigma_{vv}$. To simplify notation,
let $\Lambda \eqdef (\Sigma_z + \Sigma_u)^{-1}\Sigma_u$ denote the noise to signal ratio, then $\hat \beta = (I-\Lambda)\beta$ and $\hat \alpha =  (\Lambda \beta)^\top \E [z]+\alpha$. 
Finally, for these estimated parameters the squared error is:
\begin{align}
\label{eqn:mes_error_variance}
(\Lambda\beta)^\top \Sigma_z\Lambda \beta + ((I-\Lambda)\beta)^\top\Sigma_u((I-\Lambda)\beta).
\end{align}
Note that the actual estimator only has access to $x$, but our analysis is in terms of $z$ and $u$.
If all variables are one-dimensional, then $\hat \beta = \frac{\Sigma_z}{\Sigma_z + \Sigma_u} \beta < \beta$, where $\frac{\Sigma_z}{\Sigma_z + \Sigma_u}$ is the relative size of the true signal and is known as attenuation bias (See \citet{wager2013dropout} for a connection between regularization and feature noise)
\reffig{measurement_error} shows the estimated line which predicts $y$ from $x$ in comparison to the true line which predicts $y$ from $z$.
\begin{figure}[t]
	\centering
		\includegraphics[width=0.38\textwidth]{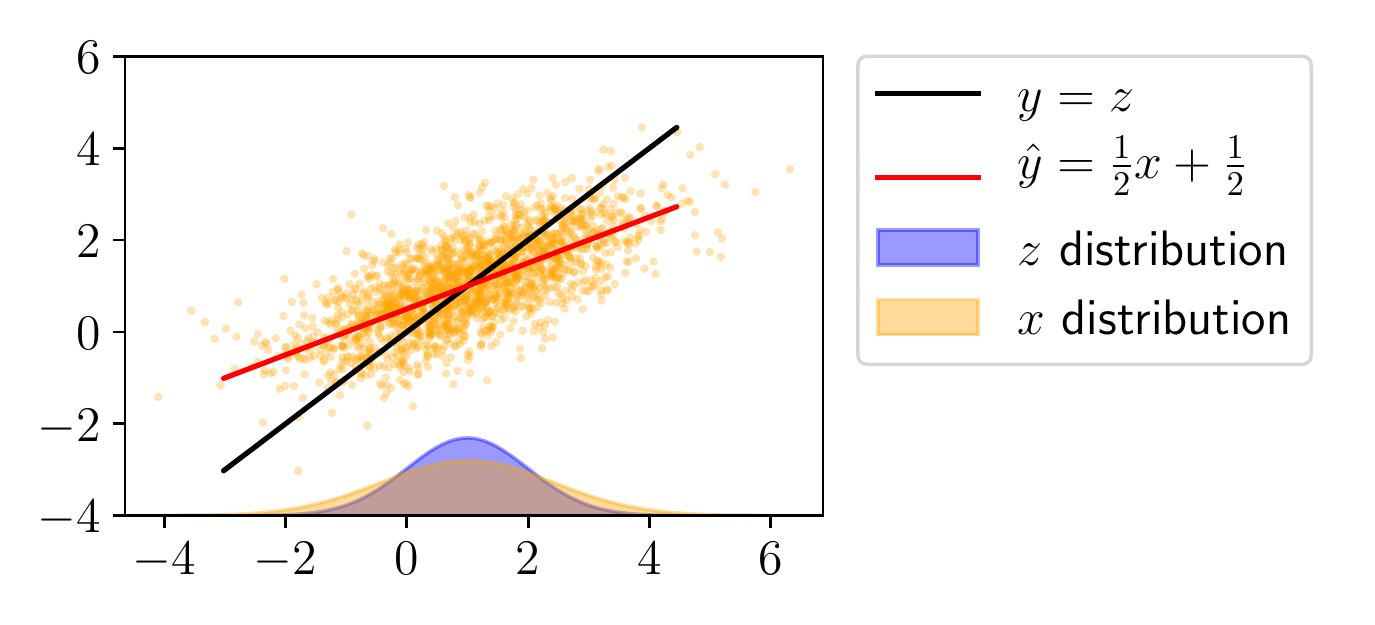}
		\caption{\label{fig:measurement_error}In the presence of feature noise, least squares estimator is not consistent; and the estimated slope (red line) is smaller than the true slope (black line).
		Here the true feature is $z \sim \sN (1,1)$, the observed features is $x \sim \sN(z,1)$, and the prediction target $y=z$. }
\end{figure}

	\section{\cld{} and \sld{} for Linear Regression}
\label{sec:ld_linear_regression} 
We now show how feature noise affects \sld{} (\refdef{sld}) and \cld{} (\refdef{cld}) for linear regression.
We focus on two loss functions when computing \cld{} and \sld{}:
\begin{itemize}[nosep]
	\item Residual: measures the amount of underestimation.
	\begin{align}
	\lr(\hat y,y) = y - \hat y.
	\end{align}
	\item Squared error: measures overall performance.
	\begin{align}
	\ls(\hat y, y) = (y - \hat y)^2.
	\end{align}
\end{itemize}
In this section, we calculate eight metrics according to different notions of loss discrepancy (CLD and SLD), different losses ($\lr$ and $\ls$), and whether group information is used or not ($\obg$ and $\obnog$).
\reftab{summary} demonstrates three points:
1) In the presence of feature noise, SLD is not zero.
2) Using group membership reduces SLD, but increases CLD.
3) Groups are more susceptible to loss discrepancy based on residual when they have different means; they are more susceptible to loss discrepancy based on squared error when they have different variances.

\begin{figure}
	\centering
	\includegraphics[width=0.4\textwidth]{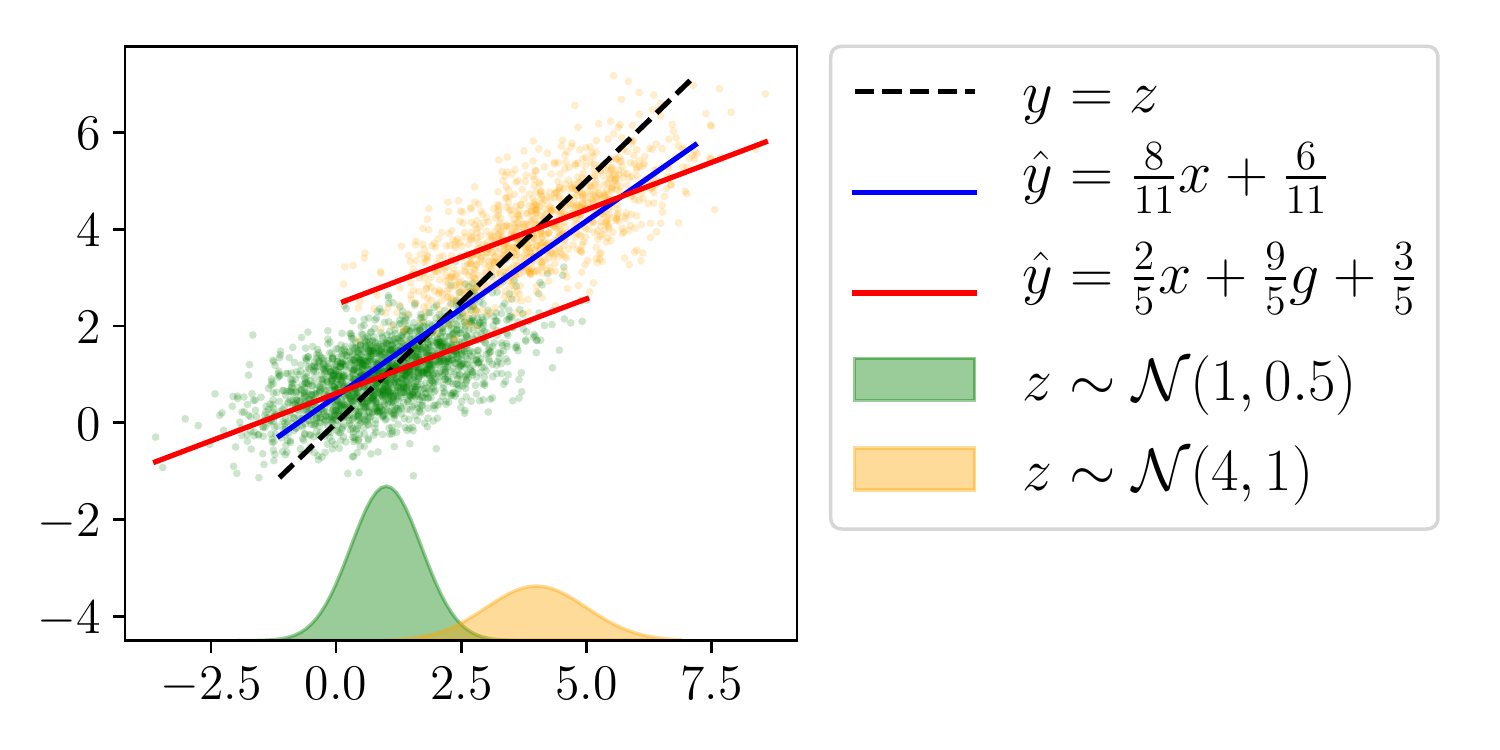}
	\caption{\label{fig:measurement_error_two_groups}
		An illustration of feature noise and its effect on \cld{} and \sld{}.
		There are two groups: green and orange.
		The true function (dashed black line) is $y=z$. The feature noise $u \sim \sN(0,1)$.
    Predicting $y$ from $x=\obnog(z,g,u) = z + u$ is the blue line which underestimates the target values for the orange group and thus has high $\sld(\obnog,\lr)$; however, since the prediction is independent of the group membership, it has $\cld=0$.
      Predicting $y$ from $\obg(z,g,u) = [z+u, g]$ is the red lines, which has high CLD since groups are treated differently but low SLD.}
\end{figure}


\subsection{Effect of Noise Without Group Information}
\label{sec:obnog}
In the entrance exam example in \refsec{setup}, suppose we observe each students' exam performance, which is a noisy version of their true knowledge of a subject.
How does this noise affect the prediction? Is it possible that this symmetric independent noise over all features and all individuals affect groups differently?
\pl{turn second question into summary of the result to not keep readers in suspense}

In this section, we show observing a noisy version of $z$ without any information about group membership ($g$) leads to high SLD.
Formally, let $u$ denote the feature noise, we define the following observation function, 
\begin{align}
\label{eqn:obnog}
\obnog(z,g,u) \eqdef z + u.
\end{align}
In this case, group information is not encoded in the observation function ($\obnog(z,0,u) = \obnog(z,1,u)$); therefore, $\cld=0$.
But, as we show, SLD depends on the distribution of $z$.

Let's first consider a simple one-dimensional case.
\reffig{measurement_error_two_groups} shows two groups, where $z \sim \sN(1,0.5)$ for the green group ($g=0$), and $z \sim \sN(4,1)$ for the orange group ($g=1$), also the $g=1$ group is twice as likely as the $g=0$ group ($\pr [g=1] = 2 \pr [g=0]$).
The prediction target here is $y=z$.
Let the  noise be Gaussian $u \sim \sN(0,1)$, and we observe $x = \obnog (z,g,u) = z + u$.
Concretely, we are interested in the statistical loss discrepancy between groups for the least squares estimator, which predicts $y$ from $x$.

As shown in \refsec{measurement_error}, having feature noise results in attenuation bias.
In this example, we have $\var [z] = \E [\var [z \mid g]] + \var [\E [z \mid g]] = \frac{8}{3}$.
Therefore, $\hat \beta = \frac{\Sigma_z}{\Sigma_z + \Sigma_u} = \frac{8}{11}$, and $\hat \alpha = \frac{6}{11}$ (see blue line in \reffig{measurement_error_two_groups}).

Let's see how this attenuation bias affects different groups. 
As shown in the \reffig{measurement_error_two_groups}, the prediction target for the orange group is underestimated.
Intuitively, if the mean of a group deviates from the mean of the population, then the expected residual for that group is large. 
Therefore, the difference between means of the groups is a factor in loss discrepancy based on residual (\lr).
\begin{align}
\label{eqn:dif_mean}
\Delta\mu_z \eqdef \E [z \mid g=1] - \E [z \mid g=0].
\end{align}
Secondly, since the green group is in the majority ($\pr [g=1] > \pr [g=0]$), the line has less bias for the green group. 
Hence, the difference between size of the groups also plays an important role in loss discrepancy.
\begin{align}
\label{eqn:dif_size}
\pr [g=1] - \pr [g=0] 
\end{align}

Thirdly, as shown in \ref{eqn:mes_error_variance}, the squared error is related to variance of data points; so intuitively, the difference in variance should also be a main factor in loss discrepancy.
\begin{align}
\label{eqn:dif_var}
\Delta \Sigma_z \eqdef \var [z \mid g=1] - \var [z \mid g=0].
\end{align}
Finally, as noise increases, the attenuation bias increases, thus the estimated line deviates more from the true line, leading to a higher loss discrepancy.
The following proposition formalizes how \sld{} depends on the four factors above; see Appendix~\ref{sec:obnog_proof} for the proof.
\begin{restatable}{proposition}{propObnog}
	\label{prop:obnog}
	Consider the observation function $\obnog$ \refeqn{obnog}.
	Let $\Lambda \eqdef (\Sigma_z + \Sigma_u)^{-1}\Sigma_u$.
	The loss discrepancies for least squares estimator are as follows:
	{\small\begin{align*}
		\cld (\obnog, \lr ) = & \, \cld (\obnog, \ls) = 0 \nonumber\\
		\sld (\obnog, \lr ) = &\pab {(\Lambda\beta)^\top \Delta\mu_z} \nonumber\\
		\sld (\obnog, \ls) =  &\Big| (\Lambda\beta)^\top \Delta\Sigma_z(\Lambda\beta) \\ &-(\pr [g=1] - \pr [g=0])((\Lambda\beta)^\top\Delta\mu_z)^2\Big |.
		\end{align*}}
	where $\Delta\mu_z$ and $\Delta\Sigma_z$ are as defined in \refeqn{dif_mean} and \refeqn{dif_var}.
\end{restatable}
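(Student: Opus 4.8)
The plan is to work directly with the population least-squares estimator, for which $\refeqns{measurement_error_beta}{measurement_error_alpha}$ give $\hat\beta = (I - \Lambda)\beta$ and $\hat\alpha = (\Lambda\beta)^\top \E[z] + \alpha$, and to read off the loss discrepancies from the explicit form of the residual. The claim $\cld(\obnog, \lr) = \cld(\obnog, \ls) = 0$ is immediate: since $\obnog(z,0,u) = \obnog(z,1,u) = z+u$, the predicted value $h(\obnog(z,g',u))$ does not depend on $g'$, so the counterfactual quantities satisfy $L_0 = L_1$ pointwise in $z$, hence $\E[\pab{L_0 - L_1}] = 0$ for either loss.

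Next I would compute the residual in closed form. Writing $\hat y = \hat\beta^\top(z+u) + \hat\alpha$ and substituting $y = \beta^\top z + \alpha$, the contributions of $\alpha$ and $\hat\alpha$ telescope with the $\E[z]$ term, leaving
\[
y - \hat y = (\Lambda\beta)^\top(z - \E[z]) - \hat\beta^\top u .
\]
Conditioning on $g$ and using that $u$ is mean-zero and independent of $(z,g)$, the $u$-term vanishes in expectation, so $\E[y - \hat y \mid g] = (\Lambda\beta)^\top(\E[z\mid g] - \E[z])$. Differencing over the two groups and taking absolute value gives $\sld(\obnog, \lr) = \pab{(\Lambda\beta)^\top \Delta\mu_z}$, using $\refeqn{dif_mean}$.

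For the squared error I would square the displayed residual and take the conditional expectation. The cross term drops by independence of $u$; the pure-$u$ term equals $\hat\beta^\top \Sigma_u \hat\beta$, which is identical for both groups and therefore cancels in the difference; and the remaining term is $(\Lambda\beta)^\top\, \E[(z-\E[z])(z-\E[z])^\top \mid g]\, (\Lambda\beta)$. Decomposing this conditional second moment as $\var[z\mid g] + (\E[z\mid g] - \E[z])(\E[z\mid g]-\E[z])^\top$ turns the group difference into $(\Lambda\beta)^\top \Delta\Sigma_z (\Lambda\beta)$ plus the difference of the two scalars $((\Lambda\beta)^\top(\E[z\mid g] - \E[z]))^2$. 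The final simplification uses the law of total expectation $\E[z] = \pr[g=1]\E[z\mid g=1] + \pr[g=0]\E[z\mid g=0]$, which yields $\E[z\mid g=1] - \E[z] = \pr[g=0]\,\Delta\mu_z$ and $\E[z\mid g=0] - \E[z] = -\pr[g=1]\,\Delta\mu_z$; plugging in and using $\pr[g=0]^2 - \pr[g=1]^2 = \pr[g=0] - \pr[g=1]$, the difference of squares collapses to $-(\pr[g=1]-\pr[g=0])((\Lambda\beta)^\top\Delta\mu_z)^2$, and taking absolute value gives the stated formula.

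The only mildly delicate part is the squared-error bookkeeping: tracking which terms are group-independent and hence cancel, and correctly reducing the difference of squared mean-deviations via the group weights. Everything else is direct substitution of the background formulas from \refsec{measurement_error}.
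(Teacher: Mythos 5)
Your proposal is correct and follows essentially the same route as the paper's proof: the same closed forms for $\hat\beta,\hat\alpha$, the same pointwise argument for $\cld=0$, the same conditional-mean computation for $\sld(\obnog,\lr)$, and for $\sld(\obnog,\ls)$ your direct expansion of the squared residual (conditional second moment $=$ variance $+$ squared mean deviation) is just the paper's bias--variance decomposition written out explicitly, with the identical cancellation of the group-independent $\hat\beta^\top\Sigma_u\hat\beta$ term and the same reduction $\pr[g=0]^2-\pr[g=1]^2=\pr[g=0]-\pr[g=1]$.
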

\pl{If you swap $\BP[g = 1]$ and $\BP[g = 0]$, it changes the SLD..., I guess it's not symmetric?}

\refprop{obnog} states that SLD is not zero in the presence of feature noise.
Furthermore, it determines the  characteristics of the group distributions which are more prone to incur high SLD.
In particular, given fixed variance in $z$ (therefore, fixed $\Lambda$),
 groups with higher difference in means are more susceptible to incur high SLD based on residuals.
SLD based on squared error has two terms: the first term is related to the difference between variance of the groups, and the second term is non-zero if groups have different sizes.
We observe the effect of the second term in a real-world dataset in \refsec{experiments}.

\begin{table*}
	\centering
	\scalebox{0.9}{
		\begin{tabular}{lll|l}
			\toprule
			&& \multicolumn{1}{c|}{Counterfactual Loss Discrepancy (\cld{})}& \multicolumn{1}{c}{Statistical Loss Discrepancy (\sld{})}\\ \midrule
			\multirow{2}{*}{$\bf \boldsymbol{\lr}$} &$\obnog$ & $0$ & $|(\Lambda\beta)^\top \Delta\mu_z|$\\
			&$\obg$ & $|(\Lambda'\beta)^\top \Delta\mu_z|$& $0$\\ \midrule 
			\multirow{2}{*}{$\bf \boldsymbol{\ls}$} & $\obnog$&  $0$ & $|(\Lambda\beta)^\top \Delta\Sigma_z (\Lambda\beta) - (\pr [g=1] - \pr [g=0])((\Lambda\beta)^\top\Delta\mu_z)^2|$\\
			&$\obg$&$|(\Lambda'\beta)^\top\Delta\mu_z|\E [|(\Lambda'\beta)^\top (2z-\mu_1 - \mu_2)|]$& $|(\Lambda'\beta)^\top \Delta\Sigma_z (\Lambda'\beta)|$\\\bottomrule
		\end{tabular}}
		\caption{\label{tab:summary} Loss discrepancies between groups,
			as proved in \refprop{obnog} and \ref{prop:obg}.
			In summary:
			1. Feature noise without group information ($\obnog$) causes high SLD (first and third row),
			2. Using group information reduces SLD but increases CLD (second and forth row), and
			3. In loss discrepancies based on residuals the difference between mean is important while for squared error the difference between variances is important. 
		}
	\end{table*}

\subsection{Effect of Noise with Group Information}
\label{sec:obg}

Now let us consider the setting where the predictor is allowed to use group information.
Does the predictor put weight on group membership information (e.g., assigns non-zero weight on the group membership feature),
or does the predictor ignore the group membership since all the necessary information ($z$) is available and we are in the infinite data limit?

In this section, we show that if the observation function reveals the group information, as feature noise increases, the estimator relies more on group information (thus resulting in high CLD). 
On the other hand, the reliance on group information alleviates SLD.

Formally, we define a new observation function that adds group membership $g$ as a separate feature:
\begin{align}
\label{eqn:obg}
\obg(z,g,u) \eqdef [z + u, g]
\end{align}
Let's first revisit the example in \reffig{measurement_error_two_groups}.
The goal is to predict $y$ (where in this example, we simply have $y=z$).
The red lines indicate the estimated line that the least squares estimator predicts for $y$
given a noisy version of $z$ and the group membership ($x = \obg (z,g,u) = [z+u, g]$).
In this case, having $g$ as an additional feature enables the model to have different intercepts for each group.
As a result, the average residual for each group is zero; therefore $\sld(\obg, \lr )=0$.
However, this benefit comes at the expense of treating individuals with the same $z$ differently.

For the squared error ($\ls$), since each group has its own intercept,
the squared error is no longer related to difference in sizes or means.
The following proposition characterize \cld{} and \sld{} for the least squares estimator using $\obg$.
See Appendix~\ref{sec:obg_proof} for the proof. 
\begin{restatable}{proposition}{propObg}
	\label{prop:obg}
	Consider the observation function $\obg$ \refeqn{obg}.
	Let $\Sigma_{z\mid g} = \E [\var [z \mid g]]$, and $\Lambda' =(\Sigma_{z\mid g} + \Sigma_u)^{-1}\Sigma_u$.
	The estimated parameters using least squares estimator are:
	\begin{align*}
	&\hat{\beta} = \begin{bmatrix}
	(I - \Lambda')\beta\\
	(\Lambda'\beta)^\top \Delta\mu_z
	\end{bmatrix},
	&\hat\alpha = (\Lambda'\beta)^\top\E[z\mid g=0] + \alpha.
	\end{align*}
	The loss discrepancies are as follows:
	{\small\begin{align*}
		&\cld (\obg, \lr ) = \pab {(\Lambda'\beta)^\top\Delta\mu_z}\\
		&\cld (\obg, \ls ) = \pab{(\Lambda'\beta)^\top\Delta\mu_z}\E \pb{\pab{(\Lambda'\beta)^\top\p{2z -\mu_1-\mu_0}}}\\
		&\sld (\obg, \lr ) = 0\\
		&\sld (\obg, \ls ) = \pab {(\Lambda'\beta)^\top\Delta\Sigma_z (\Lambda'\beta)},
		\end{align*}}
	where $\Delta\mu_z$ and $\Delta\Sigma_z$ are as defined in \refeqn{dif_mean} and \refeqn{dif_var}, and $\mu_1 \eqdef \E [z \mid g=1]$ and $\mu_0 \eqdef \E [z \mid g=0]$.
\end{restatable}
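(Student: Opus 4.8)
The plan is to mirror the proof of \refprop{obnog}, working now with the augmented feature vector $w \eqdef (z+u,\,g) \in \BR^{d+1}$, where the scalar $g$ is appended as an extra coordinate, and then reading off the population least squares solution $\hat\beta = \Sigma_w^{-1}\Sigma_{wy}$, $\hat\alpha = \E[y] - \hat\beta^\top\E[w]$ exactly as in \refeqn{measurement_error_beta}--\refeqn{measurement_error_alpha}. Writing $p \eqdef \pr[g=1]$ and using that $u$ is mean-zero and independent of $(z,g)$, the required moments are: the top-left $d\times d$ block of $\Sigma_w$ is $\Sigma_z+\Sigma_u$ (as in the no-group case), $\cov(z+u,g) = \cov(z,g) = p(1-p)\Delta\mu_z$, $\var(g) = p(1-p)$, $\cov(z+u,y) = \Sigma_z\beta$, and $\cov(g,y) = p(1-p)\,\beta^\top\Delta\mu_z$, with $\Delta\mu_z$ as in \refeqn{dif_mean}.

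Next I would solve the $(d+1)\times(d+1)$ normal equations by block elimination. The last (scalar) equation gives $\hat\beta_g = \beta^\top\Delta\mu_z - \Delta\mu_z^\top\hat\beta_z$; substituting into the first $d$ equations collapses them to $(\Sigma_z + \Sigma_u - p(1-p)\Delta\mu_z\Delta\mu_z^\top)\,\hat\beta_z = (\Sigma_z - p(1-p)\Delta\mu_z\Delta\mu_z^\top)\,\beta$. The crucial algebraic observation is the law of total variance for a binary group variable, $\Sigma_z = \Sigma_{z\mid g} + p(1-p)\Delta\mu_z\Delta\mu_z^\top$, which makes both sides simplify at once, yielding $(\Sigma_{z\mid g}+\Sigma_u)\hat\beta_z = \Sigma_{z\mid g}\beta$, i.e.\ $\hat\beta_z = (I-\Lambda')\beta$ with $\Lambda' = (\Sigma_{z\mid g}+\Sigma_u)^{-1}\Sigma_u$. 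Back-substitution gives $\hat\beta_g = (\Lambda'\beta)^\top\Delta\mu_z$, and since $\E[w] = (\mu_z,p)$ with $\mu_z - p\Delta\mu_z = \E[z\mid g=0]$, we recover $\hat\alpha = (\Lambda'\beta)^\top\E[z\mid g=0] + \alpha$, matching the stated parameter formulas.

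With the estimator in hand the loss-discrepancy computations are mechanical. Substituting $\hat\beta,\hat\alpha$ and simplifying, the residual of an individual with latent $z$, group $g'$, noise $u$ becomes $y - \hat y = (\Lambda'\beta)^\top(z - \mu_{g'}) - ((I-\Lambda')\beta)^\top u$, where $\mu_{g'} = \E[z\mid g=g']$; this immediately gives $\E[y-\hat y\mid g] = 0$, hence $\sld(\obg,\lr) = 0$. For \cld{} I condition on $z$ and average over the fresh noise $u$ as in \refdef{cld}: for the residual, $L_0 - L_1 = (\Lambda'\beta)^\top\Delta\mu_z$ is constant, so $\cld(\obg,\lr) = |(\Lambda'\beta)^\top\Delta\mu_z|$; for the squared error, writing $a_{g'} \eqdef (\Lambda'\beta)^\top(z-\mu_{g'})$, the $u$-variance term cancels and $L_0 - L_1 = a_0^2 - a_1^2 = (\Lambda'\beta)^\top\Delta\mu_z\cdot(\Lambda'\beta)^\top(2z-\mu_1-\mu_0)$, so taking $\E[|\cdot|]$ gives the stated $\cld(\obg,\ls)$. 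Finally, $\E[(y-\hat y)^2\mid g] = (\Lambda'\beta)^\top\var[z\mid g](\Lambda'\beta) + ((I-\Lambda')\beta)^\top\Sigma_u((I-\Lambda')\beta)$, where the cross term vanishes because $u$ is mean-zero and independent of $z$ and the second summand does not depend on $g$, so $\sld(\obg,\ls) = |(\Lambda'\beta)^\top\Delta\Sigma_z(\Lambda'\beta)|$ with $\Delta\Sigma_z$ as in \refeqn{dif_var}.

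The main obstacle is the first half: obtaining the closed form for $\hat\beta$. The block inversion itself is routine, but the clean statement of the proposition hinges on recognizing that the Schur-complement combination $\Sigma_z - p(1-p)\Delta\mu_z\Delta\mu_z^\top$ equals $\Sigma_{z\mid g}$; without that identity the formulas for $\hat\beta_z$, $\hat\beta_g$, and all four discrepancies look considerably messier. Everything after the parameter formulas is bookkeeping that exploits the mean-zero, independence structure of the feature noise.
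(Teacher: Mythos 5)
Your proposal is correct, and every formula checks out, but you reach the parameter estimates by a different route than the paper. You write the full $(d+1)\times(d+1)$ normal equations for the augmented vector $[z+u,\,g]$ and solve them by block elimination, with the key step being the recognition that the Schur-complement combination $\Sigma_z - p(1-p)\Delta\mu_z\Delta\mu_z^\top$ equals $\Sigma_{z\mid g} = \E[\var[z\mid g]]$ (law of total variance for binary $g$). The paper instead orthogonalizes first, in the style of Frisch--Waugh--Lovell: it re-parametrizes the regression in terms of $v = z'-\E[z'\mid g]$ and $w = g-\E[g]$, verifies $\cov(v,w)=0$, and then reads off each coefficient block separately as $\Sigma_v^{-1}\Sigma_{vz}$ with $\Sigma_v = \E[\var[z'\mid g]] = \Sigma_{z\mid g}+\Sigma_u$ and $\Sigma_{vz} = \E[\var[z\mid g]]$, translating back to the $(z',g)$ coordinates at the end. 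The two derivations are algebraically equivalent; the paper's orthogonalization makes the within-group covariance appear automatically and avoids the explicit rank-one identity, while your elimination is more mechanical but requires you to spot that identity to get the clean form of $\Lambda'$ (and implicitly needs $p(1-p)\neq 0$ to eliminate the $g$-equation). From the residual formula $y-\hat y = (\Lambda'\beta)^\top(z-\mu_{g'}) - ((I-\Lambda')\beta)^\top u$ onward, your computations of the four loss discrepancies coincide with the paper's.
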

\refprop{obg} states that the coefficient for group membership is $\betg = (\Lambda'\beta)^\top\Delta\mu_z$ (note that this is similar to $\sld (\obnog, \lr)$).
By having this coefficient for $g$, the estimator has $\sld(\obg, \lr )=0$, but it results in $\cld (\obg, \lr) = |\betg|$.
%
Returning to the example in Figure~\ref{fig:measurement_error_two_groups}, 
note that the red lines have better performance for each group, both in terms of residuals and squared error.
\pl{I feel like we say SLD = 0 and CLD $\neq 0$ three times...can we streamline / eliminate redundancy?}

\reftab{summary} presents a summary of the 8 different types of loss discrepancies. 
We also study non independent noise in Appendix~\ref{sec:general_noise}, and infinite noise in Appendix~\ref{sec:infinite_noise}.

%

\section{Persistence of Loss Discrepancy}
\label{sec:cov_shift}
So far, we assumed the training distribution used to estimate parameters is the same as the test distribution that we are interested in measuring loss discrepancy with respect to.
But what if the train and test distributions are different?
Our formulation presented in \refprop{obnog} and~\ref{prop:obg} can be refined to be in terms of train and test distributions as follows (for the sake of space, we only focus on residual loss  $\lr$),

\vspace{-0.4cm}
{\small	\begin{align}
		\cld (\obnog, \lr) &= 0   \label{eqn:decompose_cldnog}\\
	\sld (\obnog, \lr) &= \pab {(\Lambda_{\text{(train)}}\beta)^\top \Delta\mu_{z(\text{test})}} \label{eqn:decompose_sldnog}\\
	\cld(\obg, \lr) &= \pab{ (\Lambda'_{\text{(train)}}\beta)^\top\Delta\mu_{z(\text{train})}} \label{eqn:decompose_cldg}  \\
	\sld (\obg, \lr) &= \pab {(\Lambda'_{\text{(train)}}\beta)^\top (\Delta\mu_{z(\text{train})} - \Delta\mu_{z(\text{test})})}, \label{eqn:decompose_sldg}
	\end{align}}%
where the subscript denotes whether the statistics are computed on the training or test distribution.
%

When group information is not used then $\cld=0$ for any test distribution \refeqn{decompose_cldnog}, and the incurred \sld{} of $\obnog$ is due to the difference in means of the groups in the test distribution and will vanish if groups starts to have same means due to covariate shift \refeqn{decompose_sldnog}. 
On the other hand, when group information is used then $\cld{}\neq 0$ and it is proportional to the difference in the means of the groups in the \emph {training} distribution \refeqn{decompose_cldg}. 
Furthermore, in \refprop{obg} we showed when group membership is used then $\sld (\obg, \lr) = 0$; however, this will no longer holds when $\Delta\mu_{z(\text{train})} \neq \Delta\mu_{z(\text{test})}$ due to the covariate shift \refeqn{decompose_sldg}. 
In summary, using $\obg$ leads the loss discrepancies of the linear predictor to be more dependent on the training data, thus more persistent even when groups start to have the same means due to a covariate shift.


To study the persistence of loss discrepancy, we instantiate the above expressions in following simple setting.
We consider two distributions:
\begin{itemize}[nosep]
  \item {Initial distribution:} The mean of $z$ for group $g=1$ is $-\mu$, and for group $g=0$ is $\mu$, the covariance of $z$ for both groups is $\Sigma$.
	\item {Shifted distribution:} The mean of $z$ for both groups is $\mu$ and its covariance is $\Sigma$.
\end{itemize}
Here we assume groups have the same covariances/sizes, but the same analysis applies more generally.
\pl{need a better transition to this proposition, especially since we've been assuming infinite data everywhere, so what does 'more data' even mean? say more precisely, relative fraction/weighting}
The following proposition studies the persistence of loss discrepancies as we see data from the shifted distribution with higher probability.

\begin{restatable}{proposition}{covShift} 
	\label{prop:cov_shift}
  For each $0\le t \le 1$, let the training distribution be a mixture of the initial distribution with probability $t$ and the shifted distribution with probability $1-t$. 
  Let \pl{any English interpretation for these two quantities? how are they related} $c_1 = \p {({\Sigma+\Sigma_u})^{-1}\Sigma_u\beta}^\top(2\mu)$, $c_2 = \p {(\Sigma+\Sigma_u)^{-1}\mu\mu^\top (\Sigma+\Sigma_u)^{-1}\Sigma_u\beta}^\top(2\mu)$.
  For a linear predictor which is trained on the above distribution and tested on the shifted distribution, we have:
	\begin{align}
	t \p {c_1 - |c_2| }\le &\sld(\obg, \lr) =  \nonumber \\
	&\cld (\obg, \lr) \le t \p {c_1 + |c_2|} \\
	\sld (\obnog, \lr) &= \cld (\obnog, \lr)=0 .
	\end{align}
\end{restatable}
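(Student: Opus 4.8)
The plan is to instantiate the refined decompositions in equations \refeqn{decompose_cldnog}--\refeqn{decompose_sldg} with the specific two-distribution mixture described above, and then read off the bounds. First I would compute the relevant moments of the training mixture. Under the mixture with weight $t$ on the initial distribution and $1-t$ on the shifted distribution, group $g=1$ has mean $t(-\mu) + (1-t)\mu = (1-2t)\mu$ and group $g=0$ has mean $\mu$ always, so $\Delta\mu_{z(\text{train})} = (1-2t)\mu - \mu = -2t\mu$. On the shifted test distribution both groups have mean $\mu$, so $\Delta\mu_{z(\text{test})} = 0$. Since both groups share covariance $\Sigma$ at all times (and the mixture is over the \emph{same} $\Sigma$ for both components, so the within-group covariance of the mixture is still $\Sigma$ plus a between-component term that is identical for both groups and hence cancels in $\Delta\Sigma_z$), I also get $\Sigma_{z\mid g(\text{train})}$ not depending on the group — I would need to be slightly careful here: mixing two distributions with the same $\Sigma$ but different means inflates the variance by a between-means term, but that inflation is the same for $g=1$ and $g=0$ only if I treat it correctly. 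The cleanest route is to note that $\Lambda'_{\text{(train)}}$ only enters through $\Sigma_{z\mid g} = \E[\var[z\mid g]]$, and to just carry whatever $\Lambda'_{\text{(train)}}$ comes out; for the residual-loss statements that is all that is needed. Actually, to match the stated constants $c_1,c_2$ which use $(\Sigma+\Sigma_u)^{-1}$ rather than $(\Sigma_{z\mid g(\text{train})}+\Sigma_u)^{-1}$, I expect the intended reading is that the between-component variance inflation is being absorbed/ignored at leading order in $t$, or that $\Lambda'$ is evaluated with the base covariance $\Sigma$; I would state this normalization explicitly at the start.

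Next I would substitute into \refeqn{decompose_sldnog} and \refeqn{decompose_cldnog}: since $\Delta\mu_{z(\text{test})} = 0$, we get $\sld(\obnog,\lr) = 0$, and $\cld(\obnog,\lr)=0$ always; that disposes of the second displayed line. For the $\obg$ case, plug $\Delta\mu_{z(\text{train})} = -2t\mu$ and $\Delta\mu_{z(\text{test})} = 0$ into \refeqn{decompose_cldg} and \refeqn{decompose_sldg}. Both become $\bigl|(\Lambda'_{\text{(train)}}\beta)^\top(-2t\mu)\bigr| = 2t\,\bigl|(\Lambda'_{\text{(train)}}\beta)^\top\mu\bigr|$, which immediately gives the claimed equality $\sld(\obg,\lr) = \cld(\obg,\lr)$.

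It remains to show $t(c_1 - |c_2|) \le 2t\,|(\Lambda'_{\text{(train)}}\beta)^\top\mu| \le t(c_1 + |c_2|)$, i.e. (dividing by $t$) to sandwich $2|(\Lambda'_{\text{(train)}}\beta)^\top\mu|$ between $c_1 \mp |c_2|$. Here $c_1 = \bigl((\Sigma+\Sigma_u)^{-1}\Sigma_u\beta\bigr)^\top(2\mu) = 2(\Lambda_0\beta)^\top\mu$ with $\Lambda_0 \eqdef (\Sigma+\Sigma_u)^{-1}\Sigma_u$ the ``base'' noise-to-signal ratio, and $c_2 = 2\bigl((\Sigma+\Sigma_u)^{-1}\mu\mu^\top\Lambda_0\beta\bigr)^\top\mu$ is a first-order correction term. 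The key algebraic step is to relate $\Lambda'_{\text{(train)}}$ (which depends on $t$ through the inflated within-group covariance $\Sigma_{z\mid g} = \Sigma + 4t(1-t)\mu\mu^\top$ or similar) to $\Lambda_0$ via a first-order expansion: using the Woodbury/Sherman--Morrison identity on $(\Sigma + \Sigma_u + c\,\mu\mu^\top)^{-1}$ for small rank-one perturbation, $\Lambda'_{\text{(train)}}\beta = \Lambda_0\beta - c\,(\Sigma+\Sigma_u)^{-1}\mu\mu^\top\Lambda_0\beta + O(\text{higher order})$, and then $2(\Lambda'_{\text{(train)}}\beta)^\top\mu = c_1 - (\text{something comparable to }c_2)$, whence the two-sided bound with $|c_2|$ as the slack. \textbf{The main obstacle} I anticipate is precisely this: making the perturbation bookkeeping rigorous — pinning down the exact coefficient of the rank-one inflation in $\Sigma_{z\mid g(\text{train})}$, controlling the higher-order remainder so that it is genuinely dominated by $|c_2|$ over the whole range $0 \le t \le 1$ (not just asymptotically), and getting the signs right so the inequality is two-sided rather than one-sided. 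If the remainder cannot be cleanly bounded by $|c_2|$ for all $t$, the fallback is to prove the bound only to first order in $t$ (which is all the ``persistence/rate'' discussion in the surrounding text actually uses) and state it as such.
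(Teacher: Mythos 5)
Your overall route is the same as the paper's: plug the mixture moments into the train/test decompositions \refeqn{decompose_cldnog}--\refeqn{decompose_sldg}, observe that $\Delta\mu_{z(\text{test})}=0$ kills the $\obnog$ quantities and makes $\sld(\obg,\lr)=\cld(\obg,\lr)$, reduce everything to $\pab{(\Lambda'_{\text{train}}\beta)^\top(2t\mu)}$, and handle the rank-one inflation of the within-group covariance with Sherman--Morrison. That is exactly what the paper does, and your moment computations (group $g=1$ mixture mean $(1-2t)\mu$, hence $\Delta\mu_{z(\text{train})}=\pm 2t\mu$; only group $g=1$'s covariance is inflated, by $4t(1-t)\mu\mu^\top$, so the group-averaged $\Sigma_{z\mid g}$ is $\Sigma+2t(1-t)\mu\mu^\top$ under equal group sizes) are correct where you commit to them.

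The one place you go astray is in treating the final step as a perturbative expansion with a remainder that must be ``dominated by $|c_2|$.'' There is no remainder: Sherman--Morrison is an exact identity, so $\p{\Sigma+\Sigma_u+2t(1-t)\mu\mu^\top}^{-1} = (\Sigma+\Sigma_u)^{-1} - r\,(\Sigma+\Sigma_u)^{-1}\mu\mu^\top(\Sigma+\Sigma_u)^{-1}$ with $r = \frac{2t(1-t)}{1+2t(1-t)\mu^\top(\Sigma+\Sigma_u)^{-1}\mu}$ exactly. Multiplying through by $\Sigma_u\beta$ and pairing with $2t\mu$ gives $\cld(\obg,\lr) = t\,(c_1 - r\,c_2)$ exactly, and since $\Sigma+\Sigma_u\succ 0$ and $0\le 2t(1-t)\le 1$ we have $0\le r\le 1$ for the whole range $0\le t\le 1$, which immediately yields $t(c_1-|c_2|)\le t(c_1-rc_2)\le t(c_1+|c_2|)$. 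So the obstacle you flag (controlling higher-order terms, falling back to a first-order-in-$t$ statement) does not arise; the only bookkeeping you need is to pin down the coefficient $2t(1-t)$ of the rank-one term, which follows from your own variance computation averaged over the two equally sized groups. This also resolves your worry about why $c_1,c_2$ are written with $(\Sigma+\Sigma_u)^{-1}$: they are not a normalization choice but the exact output of the identity.
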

One way to interpret this proposition is as follows. 
We start with a batch from initial distribution, at each time step we predict the targets for the shifted distribution and then concatenate the new batch with the correct targets to the training data. 
At time $K$, the  training data consists of $K+1$ batches, where one batch is from the initial distribution with $\sld_\text{initial} (\obg , \lr)$ and $K$ batches from the shifted distribution.
In \refprop{cov_shift} terms,  we can assume the training data is a mixture of initial distribution with probability $t=\frac{1}{K}$ and the shifted distribution with probability of $1-t = \frac{K}{K+1}$.
The loss discrepancy on the shifted distribution is $\sld_\text{new} (\obg, \lr) \approx \frac{1}{K+1}\sld_\text{initial} (\obg, \lr)$, which converges to zero with rate $O(\frac1K)$.
For $\obnog$, we have $\cld (\obnog, \lr) = \sld (\obnog, \lr)=0$ for all $K$. See Appendix~\ref{sec:cov_shift_proof} for the proof.


	\begin{table*}
	\centering
	\scalebox{0.82}{
	\begin{tabular}{lrrlllrrrrr} \toprule 
		\bf name&\bf \#records &\bf \#features &\bf target &\bf features example & \bf group  &\bf  $\boldsymbol{\pr [g=1]}$ & $\boldsymbol{\Delta\mu_y}$ &  $\boldsymbol{\Delta\sigma_y^2}$ & $\|\boldsymbol{\Delta\mu_x}\|_2$ &  $\|\boldsymbol{\Delta\Sigma_x}\|_F$\\ \midrule
	\bf C\&C &1994&91&crime rate&\#homeless, average income, \dots& race & 0.50 & 1.10 &0.96 & 5.62 & 12.75\\ \midrule
	\multirow{2}{*}{\bf law}&	\multirow{2}{*}{20798}&	\multirow{2}{*}{25}& \multirow{2}{*}{final GPA}&\multirow{2}{*}{undergraduate GPA, LSAT, \dots} &race & 0.86 & 0.87 & 0.01 & 2.24 & 2.79\\ 
 &&&&& sex & 0.56 & 0.005 &0.04 &  0.42 & 0.51\\\midrule
\bf	students &649&33&final grade& study time, \#absences, \dots& sex & 0.59 & 0.26 & 0.12 & 1.40 & 2.26\\ \bottomrule
	\end{tabular}}
	\caption{\label{tab:datasets_description} Statistics of the used datasets. Size of the first group is denoted by $\pr[g=1]$ and  $\Delta\mu_y$ and $\Delta\sigma_y^2$ denote the difference of mean and variance of the prediction target between groups, respectively.}
\end{table*}
{\begin{figure*}[t]
		\begin{subfigure}[b]{\textwidth}
	\centering
	\includegraphics[width=\textwidth]{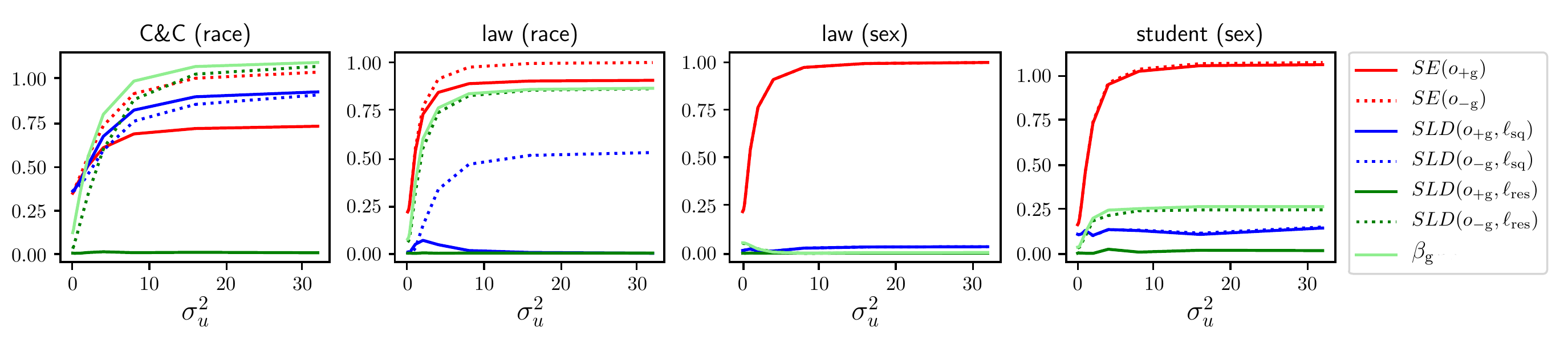}
		\caption{\label{fig:real_norm}
		Adding noise increases squared error (SE) in all datasets; however, noise induces different loss discrepancy across the datasets.
}
\end{subfigure}
\begin{subfigure}[b]{\textwidth}
	\centering
	\includegraphics[width=\textwidth]{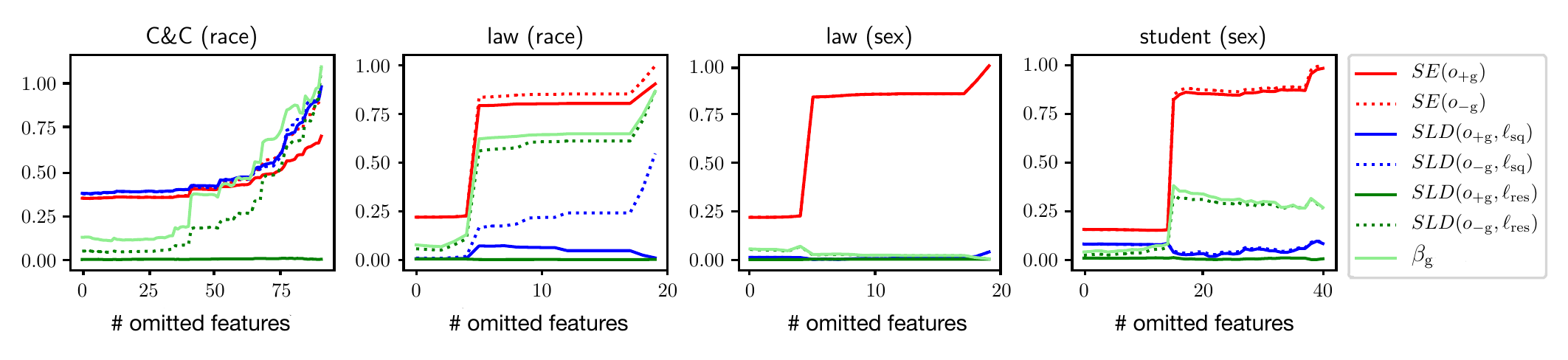}
	\caption{\label{fig:real_drop}
	In all datasets except law(sex), omitting features affects groups differently and causes high SLD.}
\end{subfigure}
\caption{\label{fig:real_both} Statistical loss discrepancy (SLD) and squared error (SE) when (a) independent normal noise ($u \in \sN(0, \sigma_u^2)$), is added to each feature (except for the group membership) (b) normal noise with high variance is added to the features sequentially (except for the group membership). We report $\betg$ as a proxy for $\cld(\obg, \lr)$.}
\end{figure*}}


\section{Experiments}
\label{sec:experiments}
{\bf Datasets.}
We consider three real-world datasets from the fairness literature. 
See \reftab{datasets_description} for a summary and Appendix~\ref{sec:datasets} for more details.

%
\pl{Start by saying what we're doing: taking the features as is and pretending these are $z$ and then adding noise on top}
Our assumptions do not hold in these datasets: 
the features are not ideal (they might have information deficiency specific to one group), the model is misspecified (it is not linear), groups might have different true functions.
However, we are still interested to see if adding noise on top of these (not ideal) features impacts groups differently, or whether the loss discrepancy remains the same as its initial value \pl{doesn't add much, cut}.
We observe that the difference between moments of the groups is still a relevant factor governing loss discrepancy; and in the presence of feature noise, datasets where groups have different means, variances, and sizes are more susceptible to loss discrepancy.



{\bf Setup.}
We standardize all features and the target in all datasets (except the group membership feature) to have mean $0$ and variance $1$.
We run each experiment $100$ times, each time randomly performing a 80--20 train-test split of the data, and reporting the average on the test set.
We compute the least squares estimator for each of the two observation functions: $\obnog$ which only have access to non-group features, and $\obg$ which have access to all features.
We consider two types of noise:
\begin{enumerate}[nosep]
  \item {Equal noise}: for different values of $\sigma_u^2$ we add independent normal noise ($u \sim \sN(0, \sigma_u^2)$) to each feature except the group membership.
\item Omitting features:
  We start with a random order of the non-group features and omit features, which is nearly equivalent to adding normal noise with a very high variance ($u \sim \sN(0,10000)$) to them sequentially.
\end{enumerate}


\paragraph{Loss discrepancy based on squared error.}
As expected, increasing the noise results in larger squared errors (SE).
We see a smoother increase in \reffig{real_norm}, as opposed to the large jumps in \reffig{real_drop}, related to the ``importance'' of a feature.

We are now interested to see whether the observed increase differs across groups, thus inducing high loss discrepancy.
In C\&C, as we increase the amount of feature noise ($\sigma_u^2$), SLD increases (blue lines in \reffig{real_norm}), meaning that one group incurs higher loss compared to the other group. 
In law (race) dataset, the sizes of the groups are very different (as shown in \reftab{datasets_description}, whites represent $86\%$ of the population).
Recalling from \refprop{obnog}, when group membership is not used ($\obnog$), the minority group incurs higher loss; this is reflected in the observation that $\sld (\obnog, \ls)$ (dotted blue line) increases as we add more noise.
On the other hand, once group membership is used, the group size does not influence the loss discrepancy; therefore, we do not observe an increase in $\sld (\obg, \ls)$ (see the solid blue line).
In law (sex) and students dataset, since groups have similar variance and sizes, we do not observe increase in loss discrepancy.
\pl{in general, can we standardize on the names of the datasets and use them throughout - "law (race)" isn't obviously a dataset name...you should put them all behind macros,
and do something like \textsc{Law-race} or \textsc{Law(race)}?
}


\paragraph{Loss discrepancy based on residual.}
When we estimate the parameters of linear regression, the bias (average residual) is always zero.
Therefore, as we increase the noise, the average residual remains zero. 

Is the average residual also zero for both groups or does adding noise cause a systematical over/underestimate for some groups (i.e., inducing loss discrepancy based on residual)?
As discussed in \refsec{obg}, when group membership is used, then the average residual for each group is always zero ($\sld(\obg,\lr)=0$)---see the solid green line in \reffig{real_both}. 
 However, if group membership is not used ($\obnog$), as discussed in \refsec{obnog}, feature noise affects groups with different means differently and causes high loss discrepancy based on residuals (see dotted green line in \reffig{real_both}).
The only dataset in which the loss discrepancy for residuals does not increase is law (sex), in which the considered groups have similar means.


Finally, as shown in \reffig{real_both}, weight of the group membership feature ($\betg$) increases as we increase the feature noise in datasets where groups have different means.
Under the strong assumption that the observed features are the same for individuals from different groups (e.g., $x=z+u$), then $\cld (\obg, \lr) = |\betg|$.
As shown in \reftab{summary}, there is a close relationship between $\cld(\obg, \lr )$ and $\sld (\obnog, \lr)$.
Although this assumption does not hold in practice, we still observe $\betg$ is close to $\sld (\obnog, \lr)$ (see the dotted green line and the light solid green line in  \reffig{real_both}).


\paragraph{Persistence of loss discrepancy.}
\label{sec:cov_shift_experiment}
\begin{figure}[t]
	\includegraphics[width=0.5\textwidth]{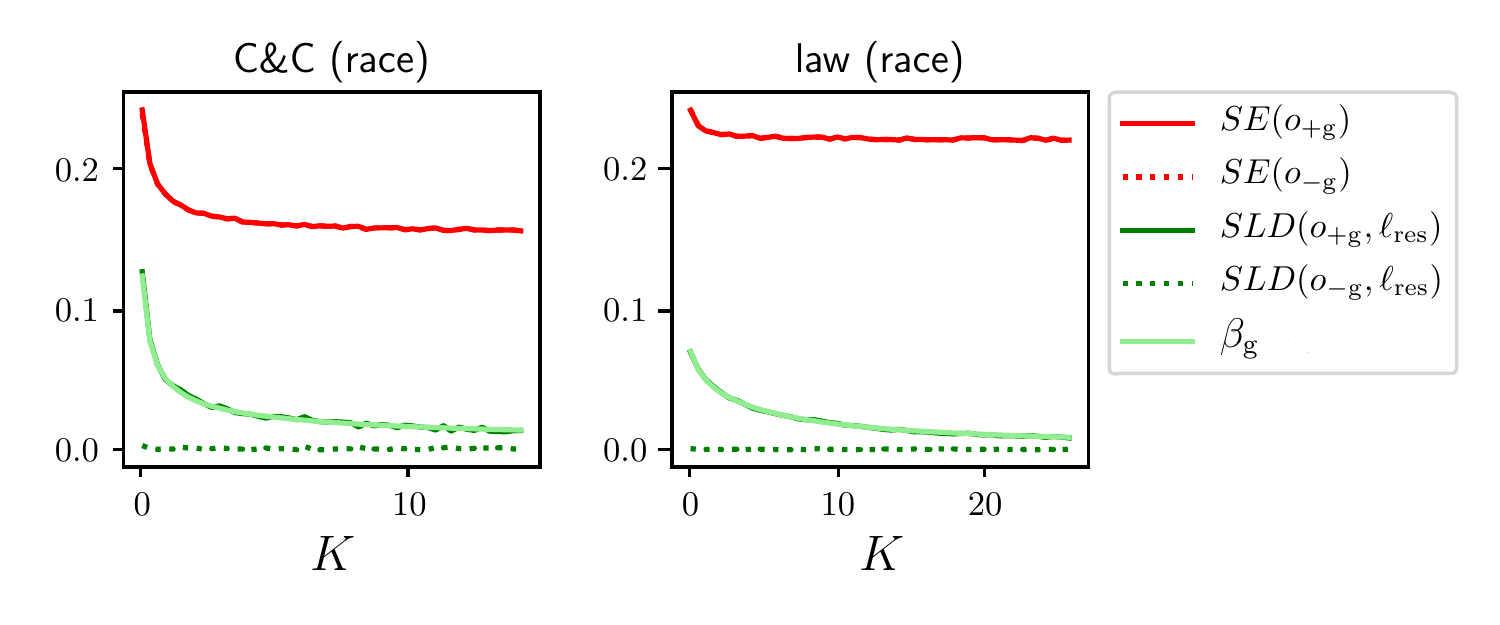}
	\caption{\label{fig:real_cov_shift}
	Loss discrepancy for the predictor learned on $\obnog$ (dotted green line) is always zero.
	Loss discrepancy for the predictor learned on $\obg$ (solid green line) converges to $0$ with rate of $\sO(\frac{1}{K})$.}
\end{figure}

We now study the persistence of loss discrepancy in the covariate shift setup introduced in \refsec{cov_shift}.
To simulate the shift, we consider the original distribution (uniform distribution over all data points), and a re-weighted distribution where weights are chosen  such that the mean of the features (except for the group membership) and the mean of the prediction target are the same between both groups. 
We compute such a re-weighting using linear programming; see Appendix~\ref{sec:experiment_details} for details.
For different values of $K$, we compute two least squares estimators (with and without group membership) on a batch of size $n=1000$ from the original distribution and a batch of size $Kn$ from the re-weighted distribution. 
We then calculate loss discrepancy and average squared error (SE) of both models on the re-weighted distribution. 

As stated in \refprop{cov_shift}, the estimator without group membership ($o_{-g}$) achieves zero residual loss discrepancy immediately (the dotted green line in \reffig{real_cov_shift}).
Meanwhile, the loss discrepancy of the estimator which uses group membership ($\obg$) vanishes more slowly with rate of $\sO(\frac{1}{K})$ (solid green line).

\pl{in general, try to be consistent with terminology throughout - sometimes you say learned predictor, sometimes training, models, sometimes least squares estimators}

	\section{Related Work and Discussion}
\label{sec:related_work}
While many papers focus on measuring loss discrepancy \citep{kusner2017,hardt2016,pierson2017fast,simoiu2017problem,khani2019mwld} and mitigating loss discrepancy \citep{calmon2017optimized,hardt2016,zafar2017fairness},
there are relatively few that study how loss discrepancy arises in machine learning models arises in the first place.

\citet{chen2018my} decompose the loss discrepancy into three components---bias, variance, and noise.
They mainly focus on the bias and variance, 
and also consider scenarios in which available features are not equally predictive for both groups. 
There are also lines of work which assume the loss discrepancy of the model is due to biased target values (e.g., \citet{madras2019fairness}).
Some work states that high loss discrepancy is due to lack of data for minority groups \citep{chouldechova2018frontiers}.
Some assume different groups have different functions (sometime in conflict with each other) \citep{dwork2018decoupled},
and therefore, fitting the same model for both groups is suboptimal.
In this work, we showed even when the prediction target is correct (not biased), with infinite data, the same function for both groups, equal noise for both groups, there is \emph{still} loss discrepancy.

Recently, there is some work showing that enforcing fairness constraints without accurately understanding how they change the predictor results in worse outcomes for both groups.
\citet{corbett2018measure} look at different group fairness notions and show how they can lead to worse results if groups have different risk distributions.
\citet{liu2018delayed} show that enforcing some fairness notions hurts the minorities in the long term.
\citet{lipton2018does} show that removing disparate treatment and disparate impact simultaneously causes in-group discrimination.
Our result that simple feature noise leads to loss discrepancy even under otherwise favorable conditions points at
a more fundamental problem: the lack of information about individuals.

\pl{I think we need some paragraph acknowledging the quite well-developed measurement error literature, and how at a technically level we compare to them - linear models,
but we study groups, they study different ways of debiasing, etc.}

	\paragraph{Problems with loss discrepancy notions.}
In this paper, we study statistical and counterfactual loss discrepancy,
which are well established in the literature.  However,
our results naturally hinge on the meaningfulness of these notions,
For completeness, we include a brief discussion about this.

Statistical loss discrepancy (\refdef{sld}) measures the loss discrepancy between groups.
SLD is valid if the considered distribution is representative (e.g., i.i.d samples) of the groups' real distribution. 
For example, consider the loan assignment task in which a model predicts whether a person will default or not.
If a model has low SLD in a loan dataset, there is no guarantee for low SLD if we use the model in the real-world.
Note that the loan dataset contains examples chosen by an entity to allocate loans (thus not i.i.d samples) and can be biased and not representative of the real-world distribution.
Thus, one can claim perfect accuracy for one group according to a dataset, while for that group 
only people who clearly will not default are in the dataset.
See \citet{corbett2018measure} for more examples.


Regarding counterfactual loss discrepancy (\refdef{cld}), three main concerns need to be considered.

\emph {Immutability of group identity}: 
Eminent researchers are opposed counterfactual reasoning with respect to an immutable characteristic (e.g., sex and race). 
They state that one cannot argue about the causal effect of a variable if its counterfactual cannot even be defined in principle \cite{holland1986statistics, freedman2004graphical}.
In response, some social scientists study the effect of some mutable variables associated with group membership (mainly associated with the perception of group membership).
For example, \citet{bertrand2004emily} studied the effect of ``racial soundingness'' of a name in a resume for getting an interview.
For more discussion on looking at race as a composite variable, see \citet{sen2016race}.

\emph {Post-treatment bias}: Characteristics such as race and sex are assigned at-conception before almost all other variables. 
Thus, considering the effect of these group identities while controlling for other variables that follow birth ($z$ in our setup) introduce post-treatment bias \cite{rosenbaum1984consequences} and can be misguided. 
Although this is a serious problem, CLD can still answer some valuable questions.
For example, according to the disparate treatment law,
a person is not liable for discrimination if she behaves in a trait-neutral manner.
In particular, an employer can make a decision based on characteristics that are crucial for job performance.
Informally, ``a decider should avoid its own discrimination not the ones that are already exists'' \cite{greiner2011causal}.
Note that CLD is asking about intentional discrimination through observation function (therefore, it is conditioned on $z$). 
CLD differs from SLD, which focuses on loss discrepancies between groups (without any conditioning on $z$), which might unintentionally occur. 
See \citet{greiner2011causal} for further discussion.

\emph {Inferring latent variables}: in real-world problems, we only observe $x$ and inferring $z$ from data is a hard (if not impossible) task \pl{cite}.
Inferring $z$ requires many strong assumptions regarding data generation.
In this work, we assumed we have access to the latent variable $z$ and focused on the effect of observation function on loss discrepancy. 
In particular, we showed that even the simple case that $x$ is a noisy version of $z$ leads to loss discrepancy.
There is a rich line of work on checking the fairness of a model when true features and observation function need to be inferred from data \citep{kusner2017,nabi2018fair,chiappa2019path,kilbertus2017avoiding,madras2019fairness}.


	\section{Conclusion and Future Work}
In this work, we first pointed out that in the presence of feature noise, the Bayes-optimal predictor of $y$ depends on the distribution of the inputs, which results in loss discrepancy for groups with different distributions.
For linear regression, we showed 
(i) feature noise causes high SLD, (ii) using group information mitigates SLD but increases CLD, and 
(iii) using group information also makes the loss discrepancy more persistent under covariate shift.
The studied loss discrepancies are not mitigated by collecting more data or designing a group-specific classifier, and designers should think of other methods such as feature replication to estimate the noise and de-noise the predictor \cite{carroll2006measurement}.

Our results rely on three main points: 
(i) we assume the true function is linear,
(ii) we study the predictor with minimum squared error among linear functions,
(iii) we consider two observation functions---feature noise with and without group information.
Relaxing these points, especially studying more complex observation functions,
would be a productive direction for future work.

	\paragraph{Reproducibility.} All code, data and experiments for this paper are available on the CodaLab platform at \url{https://worksheets.codalab.org/worksheets/0x7c3fb3bf981646c9bc11c538e881f37e}.

\paragraph{Acknowledgments.} This work was supported by Open Philanthropy Project Award.  
We would like to thank Emma Pierson, Pang Wei Koh, Ananya Kumar, and anonymous reviewers for useful feedback.

	\bibliography{refdb/all}
	\bibliographystyle{apalike}

\appendix
\onecolumn
\renewcommand{\thesection}{\Alph{section}}

\section{\refprop{obnog}}
\label{sec:obnog_proof}
\propObnog*

\begin{proof}
	Using least squares estimator, we have:
	\begin{align}
	\hat \beta =& \Sigma_x^{-1}\Sigma_{xy} \nonumber\\
	=& (\Sigma_z + \Sigma_u + \Sigma_{zu} + \Sigma_{uz})^{-1}(\Sigma_{zy} + \Sigma_{uy})
	\end{align}
	Due to assumptions ($u$ is independent of other variables), $\Sigma_{zu}=0$ and $\Sigma_{uy}=0$, and from \refeqn{best_parameters}, $\Sigma_{zy}=\Sigma_z \beta$.
		\begin{align}
		\hat \beta =&  (\Sigma_z + \Sigma_u)^{-1}\Sigma_z\beta \nonumber\\
		=& (I - (\Sigma_z + \Sigma_u)^{-1}\Sigma_u))\beta \nonumber \\
		=& (I - \Lambda)\beta
		\end{align}
	The intercept formulation is as follows:
	\begin{align}
	\hat \alpha = \beta^\top \E [z] + \alpha - \hat{\beta}^\top\E [z] = (\Lambda\beta)^\top \E [z] + \alpha,
	\end{align}
	Since $o(z, 0, u)=o(z,1,u)$, we have $\cld (\obnog, \lr ) = \cld (\obnog, \ls) =0$.
	
For computing SLD based on residual, we first compute the expected $\lr$ for the first group:
	\begin{align}
	\E [y - \hat y  \mid g=0] =& \E [\beta^\top z + \alpha - \hat \beta ^\top (z+u) - \hat \alpha \mid g=0]\\
	=&\E [(\Lambda\beta)^\top z - (\Lambda\beta)^\top \E [z] - \hat \beta ^\top u \mid g=0] 
	\end{align}
Using $\E [u]=0$ and $\E [z] = \pr [g=0] \E [z \mid g=0] + \pr [g=1] \E [z \mid g=1]$ we have:
	\begin{align}
\E [y - \hat y  \mid g=0]	=& (\Lambda\beta)^\top (\E[z \mid g=0] - \E [z]) \\ 
	=& (\Lambda\beta)^\top (\E[z \mid g=0] - \pr [g=0] \E [z \mid g=0] - \pr [g=1]\E [z\mid g=1] )\\ 
	=& (\Lambda\beta)^\top ((\pr [g=0]-1) (\E[z \mid g=1] - \E [z \mid g=0]))\\ 
	=&(\pr [g=0]-1)(\Lambda\beta)^\top\Delta\mu_z. \label{eqn:first_group_bias}
	\end{align}
	Note that the first group ($g=0$) have lower expected $\ell_r$ if its size is large or $\Delta\mu_z$ (projected on $\beta^\top\Lambda^\top$) is small.
	Similarly for the second group ($g=1$) we have:
	\begin{align}
	\E [y - \hat y  \mid g=1] =(1-\pr[ g=1])(\Lambda\beta)^\top\Delta\mu_z. \label{eqn:second_group_bias}
	\end{align}
	Computing the difference between the expected residuals of the groups we have: $\sld (\obnog, \lr )=|(\Lambda\beta)^\top\Delta\mu|$.

	For computing $\sld(\obnog, \ls)$, first note that the  squared error can be decomposed to squared of bias and variance,
	\begin{align}
	\E [(\hat y - y)^2] = \E [(\hat y - y)]^2 + \var [(\hat y - y)].
	\end{align}
	Using this decomposition, we have:
	\begin{align}
	\sld (\obnog, \ls ) =&\pab {\E [\ls \mid g=1] - \E [\ls \mid g=0]}\\
	=&\pab {\E [\lr \mid g=1]^2 - \E [\lr \mid g=0]^2 + \var [\lr \mid g=1] - \var [\lr \mid g=0]}
	\end{align}
	Using \refeqn{first_group_bias} and \refeqn{second_group_bias}, we have:
	\begin{align}
\E [\lr \mid g=1]^2 - \E [\lr \mid g=0]^2 =&  \pr [g=0]^2((\Lambda\beta)^\top \Delta\mu_z)^2 - \pr [g=1]^2 ((\Lambda\beta)^\top \Delta\mu_z)^2\\
=&-(\pr [g=1] - \pr [g=0]) ((\Lambda\beta)^\top \Delta\mu_z)^2. \label{eqn:sld1}
	\end{align}
For the difference between variances, we have:
		\begin{align}
\var [\lr \mid g=1] - \var [\lr \mid g=0] =&
		\var\pb{(\Lambda\beta)^\top z + \hat \beta^\top u \mid g=1} - \var\pb{(\Lambda\beta)^\top z + \hat \beta^\top u \mid g=0}\\
		\end{align}
	Since $u$ is independent of $z$ and $g$ the difference is only related to the difference between variances.
			\begin{align}
			\var [\lr \mid g=1] - \var [\lr \mid g=0] 
			=&\var [(\Lambda\beta)^\top z \mid g=1] - \var [(\Lambda\beta)^\top z \mid g=0] \\
			=&(\Lambda\beta)^\top \Delta\Sigma_z (\Lambda\beta) \label{eqn:sld2}
			\end{align}
Combining \refeqn{sld1} and \refeqn{sld2} completes the proof.

\end{proof}
\section{\refprop{obg}}
\label{sec:obg_proof}
\propObg*
\begin{proof}
	For simplicity, let $z'\eqdef z + u$.
	We are interested in finding the best linear estimator for $y$, given $z'$ and $g$.
	According to our assumptions we have:
	\begin{align}
	y &= \beta^\top z + \alpha\\
	\E [y \mid z',g] &= \beta^\top \E [z \mid z',g] + \alpha
	\end{align}
	
	First note that, we can represent $\E [z' \mid g]$ according to $g$ linearly as follows:
	\begin{align}
	\label{eqn:linear_g}
	\E [z' \mid g] = \E [z' \mid g=0] + (\E [z' \mid g=1] - \E [z' \mid g=0])g
	\end{align}
	We now write a linear predictor for $\E [z \mid z',g]$ given $z'$ and $g$ with some re-parametrization for simplicity. Define $v \eqdef z' - \E [z' \mid g]$ and $w \eqdef g - \E [g]$, we have:
	\begin{align}
	\gamma_0 + \gamma_1 (\underbrace{z' - \E [z'\mid g]}_{v}) + \gamma_2 (\underbrace{g - \E [g]}_{w})
	\end{align}
	If $n$ denotes the dimension of $z$ then $\gamma_0$ is $n\times 1$, $\gamma_1$ is $n\times n$ and finally $\gamma_2$ is $n\times 1$.
	
	First note that $\E [v] = \E [z'] - \E [\E [z' \mid g]] =0 $.
	Therefore, we have: 
	\begin{align}
\cov(v,w) =& \E [vw]\\
=&\E [z' g] - \E [z']\E [g] -\E [\E [z'\mid g]g] + \E [\E [z' \mid g]]\E [g]\\
=&\E [z' g] - \E [\E [z'\mid g]g] =0\\
	\end{align}
	 As a result, due to orthogonality, we can compute the parameters as follows:
	\begin{align}
	\gamma_1 =& \Sigma_v^{-1}\Sigma_{vz}
	\end{align}
	\begin{align} 
	\Sigma_v=\E \pb{\E\pb{(z'-\E [z'\mid g])(z'-\E [z' \mid g])^\top \mid g}} 
	=\E [\var [z' \mid g]] 
	\end{align}
For the covariance between $v$ and $z$ we have:
	\begin{align}
	\Sigma_{vz} =& \E [vz^\top] - \E [v]\E[z]^\top \nonumber\\ 
	=&\E\pb{(z'-\E [z'\mid g])z^\top} \nonumber \\
	=&\E [z'z^\top] - \E [\E [z'\mid g]z^\top]  \nonumber \\
	=&\E [zz^\top] + \E [uz^\top] - \E [ \E [z + u \mid g] z^\top] \nonumber \\
	=& \E [zz^\top] - \E[\E [z\mid g]z^\top] \nonumber \\
	=&\E \pb{\E \pb {zz^\top\mid g} - \E [z\mid g]\E [z\mid g]^\top} \nonumber \\
	=&\E [\var [z \mid g]]
	\end{align}
Combining the above equations and presenting $\hat \beta$ to two parts ($\hat \beta_z$ for coefficient of $z'$ and $\hat \beta_g$ for the coefficient of $g$), we have:
	\begin{align}
 \hat{\beta}_{z}=\gamma_1 = \E[\var (z' \mid g)]^{-1}\E [\var [z \mid g]].
	\end{align}

Using \refeqn{linear_g} and noting $\E [z' \mid g] = \E [z + u \mid g] = \E [z \mid g]$, we have $\gamma_2 = \Delta\mu$; therefore, $\hat\beta_g = \gamma_2 - \gamma_1\Delta\mu$
	
	Now define $\Lambda'$ to be:
	\begin{align}
	\Lambda' = I - (\E[\var (z' \mid g)]^{-1}\E [\var [z \mid g]])
	\end{align}
	Then the estimated parameters are:
	\begin{align}
	\hat \beta_z = (I-\Lambda')\beta, \quad \quad \hat{\beta}_g =  (\Lambda'\beta)^\top\Delta\mu, 
	\end{align}
	For the intercept we have:
	\begin{align}
	\hat \alpha - \alpha =& \beta^\top \E [z] - \hat\beta_z^\top \E [z] - \hat\beta_g \E [g] \nonumber \\
	=& ((I - (I-\Lambda'))\beta)^\top \E [z]- (\Lambda'\beta)^\top\Delta\mu_z\E [g] \nonumber\\
	=&(\Lambda'\beta)^\top (\E[z] - (\E [z \mid g=1] - \E [z \mid g=0])\E [g]) \nonumber\\
	=&(\Lambda'\beta)^\top\Big(\E [g]\E [z\mid g=1] + (1-\E [g])\E [z \mid g=0] - \E [g] \E [z \mid g=1]+\E[g]\E [z \mid g=0] \Big) \nonumber\\
	=&(\Lambda'\beta)^\top\E [z\mid g=0].
	\end{align}
	
	Now that we calculated the estimated parameters, we compute loss discrepancies.
	\sld{} based on residuals is zero, since $\E [\lr \mid g=0 ] = \E [\lr \mid g=1]=0$. We can calculate $\sld$ based on the squared error using the same techniques as \refprop{obnog}.
	\begin{align}
	\sld (\obg , \ls ) = (\Lambda'\beta)^\top \Delta\Sigma_z (\Lambda'\beta).
	\end{align}

We now compute $\cld (\obg, \lr )$. Recall $L_{g'} \eqdef \E [\ell (y, h(\ob (z,g,u)))\mid z]$.
\begin{align}
\cld (\obg, \lr ) =& \E \pb {\pab { L_1 - L_0}}\\
=&\E \pb {\pab {\E [y - \hat \beta_z^\top z - \hat \beta_z^\top u - \hat \beta_g - \hat \alpha \mid z] - \E [y - \hat \beta_z^\top z - \hat \beta_z^\top u  - \hat \alpha \mid z]}}\\
=&\pab {\beta_g} \\
=& \pab {(\Lambda'\beta)^\top \Delta\mu_z}.
\end{align}

For calculating $\cld (\obg, \ls)$, let $\mu_1 = \E [z \mid g=1]$ and $\mu_0 = \E [z \mid g=0]$, then we have:
\begin{align}
\cld (\obg, \ls ) =& \E \pb {\pab { L_1 - L_0}}\\
=&\E \pb {\pab {\E \pb {\p {y - \hat \beta_z^\top (z+u) - \hat \beta_g - \hat \alpha}^2 \mid z} - \E \pb {\p {y - \hat \beta_z^\top (z+u)  - \hat \alpha}^2 \mid z}}}\\
=& \E \pb {\pab {((\Lambda'\beta)^\top (z - \mu_1))^2 - ((\Lambda'\beta)^\top (z - \mu_0))^2}}\\
=&\E \pb {(\Lambda'\beta)^\top \p {(z-\mu_1)(z-\mu_1)^\top - (z-\mu_0)(z-\mu_0)^\top}    (\Lambda'\beta)}\\
=&\E \pb {(\Lambda'\beta)^\top \p {(\mu_1 - \mu_2) (2z - \mu_1 - \mu_2)^\top}    (\Lambda'\beta)} \\
=&|((\Lambda'\beta)^\top\Delta\mu_z )|2\E \pb {\pab {(\Lambda'\beta)^\top(z - \frac{\mu_1 +\mu_0}{2})}}
\end{align}

\end{proof}

\section{\refprop{cov_shift}}
\label{sec:cov_shift_proof}
\covShift*
	\begin{proof}
Due to assumption, $\Delta\mu_{z(\text{test})}=0$; therefore, $\cld (\obnog, \lr) = \sld (\obnog, \lr)=0$ and $\sld(\obg, \lr)$ and $\cld (\obg, \lr)$ are equal.


			As shown in \refprop{obg}, $\sld(\obg, \lr ) = (\Lambda'\beta)^\top\Delta\mu_{z(\text{train})}$.
			The mean difference $\Delta\mu_{z\text{(train)}} = 2t\mu$, and converges to zero as $t$ decreases.
			The challenge is to bound $\Lambda' = (\Sigma + 2t(1-t)\mu\mu^\top + \Sigma_u)^{-1}\Sigma_u$. 
			\cite{sherman1950adjustment} shows that if $A$ is nonsingular and $u,v$ are column vectors then
			\begin{align}
		(A+uv^\top)^{-1} =A^{-1} - \frac{1}{1+v^\top A^{-1}u}A^{-1}uv^\top A^{-1}
		\end{align}

		We can simplify $\Lambda'$ using the above equation. 
		
				\begin{align}
				((\Sigma + \Sigma_u)+2t(1-t)\mu\mu^\top)^{-1} =(\Sigma + \Sigma_u)^{-1} - \frac{2t(1-t)}{1+2t(1-t)\mu^\top (\Sigma + \Sigma_u)^{-1}\mu}(\Sigma + \Sigma_u)^{-1}\mu\mu^\top (\Sigma + \Sigma_u)^{-1}
				\end{align}
		
		First note that since we assumed the inverse of covariance matrix ($\Sigma + \Sigma_u$) exists, therefore, it should be positive definite.
		As a result $\mu^\top (\Sigma + \Sigma_u)^{-1} \mu \ge 0$.
		Also note that $0\le 2t(1-t) \le 1$; therefore, we can have the following bound:
		
		\begin{align}
		0\le\frac{2t(1-t)}{1+2t(1-t)\mu^\top A^{-1}\mu}  \le 1
		\end{align}
For simplicity, let $ r\eqdef \frac{2t(1-t)}{1+2t(1-t)\mu^\top A^{-1}\mu}$, using the above bound we can bound loss discrepancy when $\obg$ is used:
		\begin{align}
		\cld(\obg, \lr)=& (\Lambda'_{\text{train}}\beta)^\top \Delta\mu_{\text{train}}\\
	=&t \p { \p {(\Sigma+\Sigma_u)^{-1}\Sigma_u\beta}^\top(2\mu) - r \p{(\Sigma+\Sigma_u)^{-1}\mu\mu^\top (\Sigma+\Sigma_u)^{-1}\Sigma_u\beta}^\top(2\mu)} 
	\end{align}
	Defining $c_1 \eqdef \p {(\Sigma + \Sigma_u)^{-1}\Sigma_u\beta}^\top(2\mu) $ and $c_2 \eqdef \p{(\Sigma+\Sigma_u)^{-1}\mu\mu^\top (\Sigma+\Sigma_u)^{-1}\Sigma_u\beta}^\top(2\mu)$ we have:
	\begin{align}
		t \p{c_1 - |c_2| }\le \sld(\obg, \lr ) = \cld (\obg, \lr ) \le t \p {c_1+ |c_2|}
		\end{align}
	\end{proof}
	
\section{CLD and SLD are not comparable}

\begin{proposition}
	\label{prop:not_comparable}
	There exists a setting where $\cld = 0$ but $\sld \neq 0$,
	and another where $\sld = 0$ but $\cld \neq 0$.
\end{proposition}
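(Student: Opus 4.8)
The plan is to obtain both settings as immediate instances of the linear-regression computations already in hand, namely \refprop{obnog} and \refprop{obg}. I would work in the one-dimensional model of \refsec{setup} with $d = 1$, $\beta = 1$, $\alpha = 0$, two equally likely groups, and unit-variance mean-zero feature noise $\Sigma_u = 1$; let $z$ have variance $1$ within each group, with conditional means $\mu_0 = 0$ and $\mu_1 = 1$, so that $\Delta\mu_z = 1 \neq 0$.

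\emph{A setting with $\cld = 0$ but $\sld \neq 0$.} Take the observation function $\obnog$, which ignores $g$. Then the predictor's output is the same under either group membership, so the counterfactual swap changes nothing and $\cld(\obnog, \lr) = 0$ by \refprop{obnog}. For the same reason $\sld$ need not vanish: by \refprop{obnog}, $\sld(\obnog, \lr) = \pab{(\Lambda\beta)^\top\Delta\mu_z}$ with $\Lambda = (\Sigma_z + \Sigma_u)^{-1}\Sigma_u$, and here $\Sigma_z = \E[\var[z\mid g]] + \var[\E[z\mid g]] = 1 + \tfrac{1}{4} = \tfrac{5}{4}$, giving $\Lambda = \tfrac{4}{9}$ and $\sld(\obnog, \lr) = \tfrac{4}{9} > 0$.

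\emph{A setting with $\sld = 0$ but $\cld \neq 0$.} Keep the same model but use $\obg$, which appends $g$ as a feature. By \refprop{obg} the least-squares fit has a separate intercept per group, so every group's mean residual is zero and $\sld(\obg, \lr) = 0$; at the same time $\cld(\obg, \lr) = \pab{(\Lambda'\beta)^\top\Delta\mu_z}$ with $\Lambda' = (\Sigma_{z\mid g} + \Sigma_u)^{-1}\Sigma_u = \tfrac{1}{2}$, so $\cld(\obg, \lr) = \tfrac{1}{2} > 0$. Together the two settings establish the claim.

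Because \refprop{obnog} and \refprop{obg} do all the work, there is essentially no obstacle here; the only point to watch is choosing the parameters generically enough that no inner product cancels accidentally, which in the scalar case above is immediate since $\beta$, $\Sigma_u$ and $\Delta\mu_z$ are all nonzero. If one prefers an argument that does not invoke the regression formulas, the same two phenomena can be checked by hand: for the first setting, take any fixed predictor that does not depend on $g$ and any two groups whose target distributions differ (then $\cld = 0$ trivially while $\sld \neq 0$, e.g. $h \equiv 0$ with $y=z$ constant and distinct within each group); for the second, with $y = z$ and non-degenerate $z$, the group-dependent predictor $h([z,0]) = z$ and $h([z,1]) = 2\mu_1 - z$ has zero mean residual in each group (so $\sld(\obg,\lr) = 0$) yet $\cld(\obg,\lr) = 2\,\E\pab{z - \mu_1} > 0$.
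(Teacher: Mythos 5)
Your proof is correct, but it takes a genuinely different route from the paper's. The paper constructs a minimal, self-contained example from scratch: two individual types $z_1, z_2$ mixed in proportions $1/4$ and $3/4$ across the two groups, with losses assigned directly to each type (and then a concrete omitted-feature instantiation), so that $\cld=0,\ \sld\neq 0$ follows from the differing mixtures and $\sld=0,\ \cld\neq 0$ from a group-dependent swap of the per-type losses. You instead instantiate \refprop{obnog} and \refprop{obg} in a one-dimensional linear model with $\Delta\mu_z=1$, and your arithmetic checks out: $\Sigma_z = 1+\tfrac14$ gives $\Lambda=\tfrac49$ and $\sld(\obnog,\lr)=\tfrac49>0$ with $\cld(\obnog,\lr)=0$, while $\Lambda'=\tfrac12$ gives $\cld(\obg,\lr)=\tfrac12>0$ with $\sld(\obg,\lr)=0$. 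What your route buys is that the non-comparability is exhibited by the least-squares estimators the paper actually studies, rather than by hand-designed predictors, which ties the proposition more tightly to the paper's main results; what it costs is logical dependence on Propositions~\ref{prop:obnog} and~\ref{prop:obg}, whereas the paper's two-type construction is elementary and independent of the regression machinery. Your fallback ``by hand'' constructions ($h\equiv 0$ for the first setting; $h([z,1])=2\mu_1 - z$ for the second) are closer in spirit to the paper's own argument and are also valid, so either version of your write-up establishes the claim.
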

\begin{proof}
	Assume there are only two kinds of individuals: $z_1$ and $z_2$.
	Assume $\frac{1}{4}$ of group one are $z_1$ and the rest are $z_2$.
	Assume $\frac{3}{4}$ of group two are $z_1$ and the rest are $z_2$.
	
	If a predictor has loss $1$ on $z_1$ and loss $2$ on $z_2$ (independent of their group membership), then $\cld=0$ but $\sld=0.5$.
	If the loss is the same as above when $g=0$ but when $g=1$ the loss is $2$ for $z_1$ and $1$ for $z_2$, then $\cld=1$, but $\sld=0$.
	Let $\ell$ be squared error, $z$ be a two dimensional vector, and $y=z_1 + z_2$. 
	Let $\pr (z_2 =0 \mid g=0) = 1$ but $\pr (z_2=0 \mid g=1) < 1$.
	If $\ob (z,g,u) = z_1$ then $\cld=0$; however, $\sld \neq 0$ and the loss for the second group is higher.
	Now if $\ob (z,g) = [z_1,g]$ then define $h(o(z,g)) = z_1 + \E [y - z_1 \mid g]$ therefore, $SLD=0$ while $\cld\neq 0$.
\end{proof}

\section{Infinite noise}
\label{sec:infinite_noise}

When noise is infinite, the predictor simply predicts $\mu_y= \E [y]$ for all data points when group membership is not available ($\obnog$), and it predicts the average of each group for the members of that group when group membership is available ($\obg$). 
In this case, statistical loss discrepancy is only related to the moments of groups on $y$, see \reftab{infinite_noise}.

\begin{table}[h]
	\centering
		\begin{tabular}{ll|l|l}
			\toprule
			& \multicolumn{1}{c|}{\bf \cld{}}& \multicolumn{1}{c|}{\bf \sld{}} & \multicolumn{1}{c}{\bf average performance}\\ \midrule
			\multirow{2}{*}{$\bf \boldsymbol{\lr}$} &$\obnog: 0$ & $\obnog: \Delta\mu_y$ & $\obnog: \E [\lr] =0$\\
			&$\obg: \Delta\mu_y$&$\obg: 0$ & $\obg: \E [\lr ]=0 $\\ \midrule
			\multirow{2}{*}{$\bf \boldsymbol{\ls}$} &$\obnog: 0$ & $\obnog: \Delta\sigma_y^2 + (1-2\E [g])\Delta\mu_y$ & $\obnog: \E [\ls ]= \sigma_{y}^2$ \\ 
			&$\obg: 2\Delta\mu_y\E [|y  -\mu_y-(\half - \E [g])\Delta\mu_y|]$& $\obg: \Delta\sigma_y^2$ & $\obg: \E [\ls ] =\sigma_{y\mid g}^2$\\\bottomrule
		\end{tabular}
		\caption{\label{tab:infinite_noise}A summery of metrics in the presence of infinite noise. Here, $\Delta\mu_y \eqdef \E [y \mid g=1] - \E [y \mid g=0]$ and $\Delta\sigma_y^2 \eqdef \var [y \mid g=1] - \var [y \mid g=0]$.}
	\end{table}
\section{General noise}

The independence assumption on the noise in \refsec{obnog} and \ref{sec:obg} enables us to have a closed-form for \cld{} and \sld{}.
Without any assumptions on the noise, we cannot specify anything about the estimated parameters more than \refeqn{best_parameters}.
However, we can still analyze the form of \sld{} given the estimated parameters

\label{sec:general_noise}
\begin{restatable}{proposition}{ildGldLr}
	\label{prop:general_noise}
	Fix $\ob(z,g,u)$ as an arbitrary observation function, such that for $x=\ob(z,g,u)$ the covariance matrix ($\Sigma_x$) is invertible.
	Let $\hat \beta$ and $\hat \alpha$ be the estimated parameters as computed in \refeqn{best_parameters}.
	Equalize dimensions of $\hat \beta$, $\beta$, $z$, and $x$  by adding extra $0$s at the end; and let $u = x -z$ denote the add-on error to the value of the true feature.
The statistical loss discrepancies are as follows:
	\begin{align}
	\sld (\ob, \lr )=&\Big |\hat \beta^\top \Delta\mu_x - \Delta\mu_y \Big | \label{eqn:first_sld}\\    
	=&\Big |(\hat\beta^\top - \beta^\top) \Delta\mu_z +  \hat \beta^\top \Delta\mu_u \Big |, \label{eqn:second_sld}
	\end{align}
	where for any random variable $t$, $\Delta\mu_t = \E [t \mid g=1] - \E [t \mid g=0]$.
	For SLD based on $\ls$, we have:
	\begin{align}
	\sld(\ob, \ls) =&\pab {\Delta\Sigma_y+ \hat{\beta}^\top\Delta\Sigma_x\hat{\beta} - 2 \hat{\beta}^\top\Delta\Sigma_{xy} }\\
	=&\pab {(\beta - \hat{\beta})^\top\Delta\Sigma_z(\beta - \hat \beta) + \hat{\beta}^\top\Delta\Sigma_u\hat \beta - 2( \beta - \hat\beta)^\top\Delta\Sigma_{zu}\hat{\beta}}
	\end{align}
		where for any two random variable $s,t$ we define $\Delta \Sigma_{st}$ to be:
		\begin{align}
		\Sigma_{st} \eqdef
		 \E [(s - \mu_s) (t - \mu_t)^\top \mid g =1 ] - \E [(s - \mu_s)^\top (t - \mu_t) \mid g =0 ].
		\end{align}

\end{restatable}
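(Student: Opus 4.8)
The plan is to follow the same blueprint as the proofs of \refprop{obnog} and \refprop{obg}, but without exploiting the independence structure of the noise. The starting point is the fact, already recorded in \refeqn{best_parameters}, that the least squares predictor is $\hat y = \hat\beta^\top x + \hat\alpha$ with $\hat\alpha$ chosen so that the overall bias is zero, i.e.\ $\hat\alpha = \mu_y - \hat\beta^\top\mu_x$. First I would compute the group-conditional expected residual. Since $y - \hat y = y - \hat\beta^\top x - \hat\alpha = (y - \mu_y) - \hat\beta^\top(x - \mu_x)$, taking $\E[\cdot \mid g]$ gives $\E[\lr \mid g] = (\mu_{y\mid g} - \mu_y) - \hat\beta^\top(\mu_{x\mid g} - \mu_x)$. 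Subtracting the $g=0$ case from the $g=1$ case, the $\mu_y$ and $\mu_x$ terms cancel and we are left with $\sld(\ob,\lr) = |\Delta\mu_y - \hat\beta^\top\Delta\mu_x|$, which is \refeqn{first_sld}. For \refeqn{second_sld} I would substitute $x = z + u$ (after the dimension-padding convention stated in the proposition) so $\Delta\mu_x = \Delta\mu_z + \Delta\mu_u$, and use the true model $y = \beta^\top z + \alpha$ to write $\Delta\mu_y = \beta^\top\Delta\mu_z$; collecting terms yields $(\hat\beta^\top - \beta^\top)\Delta\mu_z + \hat\beta^\top\Delta\mu_u$.

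Next I would handle the squared-error discrepancy via the bias--variance decomposition $\E[(\hat y - y)^2 \mid g] = \E[\hat y - y \mid g]^2 + \var[\hat y - y \mid g]$, exactly as in the proof of \refprop{obnog}. Write $\hat y - y = \hat\beta^\top x - y + \text{const}$, so $\var[\hat y - y \mid g] = \hat\beta^\top\Sigma_{x\mid g}\hat\beta - 2\hat\beta^\top\Sigma_{xy\mid g} + \Sigma_{y\mid g}$ where the subscript $\mid g$ denotes the group-conditional (co)variance. Taking the difference across groups, the squared-bias terms contribute $\E[\lr\mid g{=}1]^2 - \E[\lr\mid g{=}0]^2$ and the variance terms contribute $\Delta\Sigma_y + \hat\beta^\top\Delta\Sigma_x\hat\beta - 2\hat\beta^\top\Delta\Sigma_{xy}$. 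Here I would note that in the statement as written the squared-bias term appears to be folded in implicitly or treated as subsumed; in any case, substituting $x = z+u$, $y = \beta^\top z + \alpha$, and expanding $\Sigma_x = \Sigma_z + \Sigma_u + \Sigma_{zu} + \Sigma_{uz}$ and $\Sigma_{xy} = \Sigma_{zy} + \Sigma_{uy} = \Sigma_z\beta + \Sigma_{zu}^\top\beta$ (using $y = \beta^\top z + \alpha$) and regrouping gives the second form $(\beta - \hat\beta)^\top\Delta\Sigma_z(\beta - \hat\beta) + \hat\beta^\top\Delta\Sigma_u\hat\beta - 2(\beta - \hat\beta)^\top\Delta\Sigma_{zu}\hat\beta$. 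This is essentially a completion-of-square bookkeeping step: the cross terms between $z$ and $u$ that no longer vanish (since we dropped independence) are precisely what produce the $\Delta\Sigma_{zu}$ term, and the coefficient $(\beta - \hat\beta)$ appears because it multiplies $z$ after the substitution $\hat y - y = -(\beta - \hat\beta)^\top z + \hat\beta^\top u + \text{const}$.

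The main obstacle I anticipate is purely algebraic rather than conceptual: keeping the dimension-padding convention consistent so that $\hat\beta$, $\beta$, $z$, $u$, and $x$ all live in a common space, and then carefully tracking the non-vanishing $z$--$u$ cross-covariance terms through the expansion of $\Sigma_x$ and $\Sigma_{xy}$ without dropping anything. In the independent case these terms disappear and the two expressions collapse to the familiar forms; here I must verify that the linear-in-$z$ part of $\hat y - y$ really does have coefficient exactly $-(\beta - \hat\beta)$ and the linear-in-$u$ part exactly $\hat\beta$, after which the quadratic form $\var[\cdot \mid g]$ expands mechanically. A secondary subtlety is that $\Sigma_{xy}$ should be expressed consistently in terms of $\Sigma_{zy}$ and $\Sigma_{uy}$ (or equivalently $\Sigma_{zu}$ via $y = \beta^\top z + \alpha$), and one should double-check the factor conventions (transpose placement, the asymmetry in the definition of $\Delta\Sigma_{st}$) so the final displayed identities match. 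None of these steps requires anything beyond linearity of expectation, bilinearity of covariance, and the normal-equations characterization of $\hat\beta$ already given in \refeqn{best_parameters}.
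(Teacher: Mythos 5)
Your treatment of the residual case is correct and matches the paper's: both compute $\E[\lr\mid g]$ using $\hat\alpha=\mu_y-\hat\beta^\top\mu_x$, take the group difference to get $|\Delta\mu_y-\hat\beta^\top\Delta\mu_x|$, and then substitute $x=z+u$, $\Delta\mu_y=\beta^\top\Delta\mu_z$ for the second form.

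For the squared-error case you take a genuinely different route from the paper, and as written it has a loose end. You decompose $\E[\ls\mid g]=\E[\lr\mid g]^2+\var[\lr\mid g]$ with \emph{group-conditional} variances, identify the variance difference with $\Delta\Sigma_y+\hat\beta^\top\Delta\Sigma_x\hat\beta-2\hat\beta^\top\Delta\Sigma_{xy}$, and then remark that the leftover squared-bias term ``appears to be folded in implicitly or treated as subsumed.'' It is not subsumed under your reading of the symbols: the proposition defines $\Delta\Sigma_{st}$ via $\E[(s-\mu_s)(t-\mu_t)^\top\mid g]$ with $\mu_s,\mu_t$ the \emph{unconditional} means, i.e.\ a group-conditional second moment about the population mean, which equals the group-conditional covariance \emph{plus} $(\mu_{s\mid g}-\mu_s)(\mu_{t\mid g}-\mu_t)^\top$. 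That extra rank-one term, contracted with $\hat\beta$ and differenced across groups, is exactly your missing squared-bias contribution, so your two pieces do recombine into the stated formula --- but you must say so and check it; as written the identification of ``variance terms'' with the $\Delta\Sigma$ expressions is simply wrong if those $\Delta\Sigma$'s were conditional covariances. The paper avoids the issue entirely: it never splits into bias and variance, but instead uses $\alpha-\hat\alpha=-(\beta-\hat\beta)^\top\mu_z+\hat\beta^\top\mu_u$ to write the residual as $(\beta-\hat\beta)^\top(z-\mu_z)-\hat\beta^\top(u-\mu_u)$ centered at the population means, squares directly, and reads off the $\Delta\Sigma_z$, $\Delta\Sigma_u$, $\Delta\Sigma_{zu}$ terms in one pass (and analogously with $(y-\mu_y)-\hat\beta^\top(x-\mu_x)$ for the first form). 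Your route buys a cleaner conceptual parallel with the proof of \refprop{obnog}, at the cost of having to reconcile two different centering conventions; the paper's route is shorter because it commits to population-mean centering from the start. The rest of your algebra (the coefficients $-(\beta-\hat\beta)$ on $z$ and $\hat\beta$ on $u$, and the surviving $\Delta\Sigma_{zu}$ cross term) is consistent with the paper's calculation.
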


\begin{proof}
	
	\begin{align}
	\nonumber\sld (\ob, \lr ) =& \pab{\E [\lr \mid g=1] - \E [\lr \mid g=0]}\\
	\nonumber=& \Big |\E \pb {\beta^\top z + \alpha - \hat \beta^\top (z+u)  - \hat \alpha \mid g=1}-  \E \pb {\beta^\top z+ \alpha - \hat \beta^\top (z+u)  - \hat \alpha \mid g=0}\Big |\\
	\nonumber=&\Big |(\beta^\top - \hat\beta^\top) (\E [z \mid g=1] - \E [z \mid g=0]) - \hat \beta^\top (\E [u \mid g=1] - \E [u \mid g=0])\Big |\\
	=& \pab { (\beta - \hat \beta)^\top \Delta\mu_z - \hat \beta \Delta\mu_u}
	\end{align}
	
	$\sld (\ob, \lr )$ can also be formulated as follows:
	\begin{align}
	\nonumber\sld (\ob, \lr ) =& \pab{\E [\lr \mid g=1] - \E [\lr \mid g=0]}\\
	\nonumber=& \Big |\E \pb {y - \hat \beta^\top x  - \hat \alpha  \mid g=1}-  \E \pb {y - \hat \beta^\top x  - \hat \alpha  \mid g=0}\Big |\\
	=& \pab {\Delta \mu_y - \hat \beta^\top \Delta\mu_x} \label{eqn:other_form}
	\end{align} 
	This formulation implies that if $\Delta\mu_x =0$ and $\Delta\mu_y=0$ then for any arbitrary linear predictor $\sld (\ob, \lr )$ is always zero.

	For computing the statistical loss discrepancy based on squared error, first note that using \refeqn{best_parameters}, we have: $\hat \alpha = \beta \mu_z + \alpha - \hat \beta \mu_x \implies \alpha - \hat \alpha = - (\beta -\hat{\beta})^\top\mu_z + \hat{\beta}\mu_u$. We can now compute the expected squared error for the first group, as follows:
	\begin{align}
	\E [\ls \mid g=0] =& \E \Big[\p {\beta^\top z + \alpha - \hat \beta^\top (z+u)  - \hat \alpha}^2 \mid g=0\Big ]\\
	\nonumber=& \E\Big[\p {(\beta - \hat \beta)^\top (z - \mu_z) - \hat \beta^\top (u - \mu_u)}^2 \mid g=0\Big ]\\
	=&\E \Big[ (\beta - \hat \beta)^\top (z-\mu_z) (z-\mu_z)^\top (\beta - \hat \beta) + \hat{\beta}^\top(u-\mu_u)(u-\mu_u)^\top \hat \beta \\
	&- 2(\beta - \hat \beta)^\top(z-\mu_z)(u-\mu_u)^\top \hat \beta \mid g=0\Big] \nonumber\\
	\end{align}
	
	The expected squared error for $g=1$ is similar, computing the difference we have:
	\begin{align}
	\sld(\ob, \ls ) =& \pab {\E [\ls \mid g=1] - \E [\ls \mid g=0]}\\
	=&\pab {(\beta - \hat{\beta})^\top\Delta\Sigma_z(\beta - \hat \beta) + \hat{\beta}^\top\Delta\Sigma_u\hat \beta - 2( \beta - \hat\beta)^\top\Delta\Sigma_{zu}\hat{\beta}}
	\end{align}

	Same as \refeqn{other_form}, we can compute \sld{} based on squared error in another form as well:
	\begin{align}
	\E [\ls \mid g=0] =& \E \pb {\p {y - \hat \beta^\top x  - \hat \alpha}^2 \mid g=0}\\
	\nonumber=& \E \pb {\p {y - \hat \beta^\top x  - \mu_y + \hat \beta^\top \mu_x}^2 \mid g=0}\\
	\nonumber=& \E \pb {\p {(y - \mu_y) - \hat \beta^\top (x  - \mu_x)}^2 \mid g=0}
	\end{align}
	
	\begin{align}
	\sld(\ob, \ls) =& \pab {\E [\ls \mid g=1] - \E [\ls \mid g=0]}
	=\pab {\Delta\Sigma_y+ \hat{\beta}^\top\Delta\Sigma_x\hat{\beta} - 2 \hat{\beta}^\top\Delta\Sigma_{xy} }
	\end{align}

\end{proof}

%
Equation \refeqn{first_sld} states that if the average over features and the target values are the same between groups then $\sld (\obnog, \lr )= 0$ for any linear predictor.
Equation~\refeqn{second_sld} states that if we cannot estimate the true parameters ($\hat \beta \neq \beta$) then in order to have $\sld (\obnog, \lr )=0$ we should enforce different average add-on errors for the groups ($\Delta\mu_u \neq 0$).

\section{Datasets}
\label{sec:datasets}
{\bf Students\footnote{\tiny \url{https://archive.ics.uci.edu/ml/datasets/student+performance}}}\citep{cortez2008using}
This dataset represents student achievements in secondary education of one Portuguese school in mathematics subject.
The data features ($x$) include student grades, demographic, social, and school-related features  and it was collected by using school reports and questionnaires. 
The target ($y$) is the final year grade, and we set the groups ($g$) to be males and females.
Students dataset contains the students' 1st and 2nd period grades, which are strongly correlated with the prediction target (the final grade issued at the 3rd period).

{\bf Law School Admissions Council’s National Longitudinal Bar Passage Study \footnote {\tiny Downloaded from \url{https://github.com/jjgold012/lab-project-fairness} \citep {bechavod2017penalizing}}}\citep{wightman1998lsac}
This dataset consists of the records of graduate students in law major. 
 We set the target ($y$) to be the final Grade Point Average.
 The features include student grades and school-related features.  We consider two versions of this data, one where $g$ is race and the other where $g$ is sex.
 The Law school dataset contains features such as first-year GPA; which are strongly correlated with the prediction target.

{\bf Communities and Crime \footnote{\tiny\url{http://archive.ics.uci.edu/ml/datasets/communities+and+crime}}}\citep{redmond2002data}
This dataset represents communities within the United States, each data point represents a community, and the goal is to predict the per capital violent crimes ($y$) given features such as average income in that community. 
This dataset contains eight continuous features related to the race, specifying the number and percentage of different races within that community.
We replaced these features with a single binary feature; which is $1$ if a community is in top 50\% of communities with a majority of whites, and $0$ if otherwise.
We set this binary feature to be the indicate the groups.


\section{Experimental details for persistence of loss discrepancy}
\label{sec:experiment_details}
%
In the experiment section (\refsec{experiments}), we propose a re-weighted distribution under which groups have similar means.
Here we explain how to define a new distribution on data points such that according to the new distribution groups have same mean.
Let $X_1 \in \BR^{n_1\times d}$ be the data points for the first group and $X_2 \in \BR^{n_2\times d}$ be the data points for the second group.
We compute $p_1 \in \BR^{n_1}$ and $p_2 \in \BR^{n_2}$, with the following linear programming:

\begin{align}
\max\quad &\|p_1\|_1\\
s.t.\quad &p_1^\top X_1 = p_2^\top X_2\\
\quad & p_1^\top y_1 = p_2^\top y_2\\
& 0\le {p_1}_i\le 1, & i=1,\dots,n_1\\
& 0\le {p_2}_i\le 1, & i=1,\dots,n_2\\
&\|p_1\|_1 = \|p_2\|_1
\end{align}

We then sample points according to $\frac{p_1}{\|p_1\|_1}$ and  $\frac{p_2}{\|p_2\|_1}$.

\end{document}